\documentclass{article}

\PassOptionsToPackage{numbers, compress}{natbib}
\usepackage{natbib}

\usepackage{arxiv}

\usepackage[utf8]{inputenc} 
\usepackage[T1]{fontenc}    
\usepackage{hyperref}       
\usepackage{url}            
\usepackage{booktabs}       
\usepackage{amsfonts}       
\usepackage{nicefrac}       
\usepackage{microtype}      
\usepackage{xcolor} 
\usepackage{amsmath}
\usepackage{amsthm}
\usepackage[normalem]{ulem}

\usepackage{hyperref}
\usepackage{amsfonts}
\usepackage{graphicx}

\usepackage[multiple]{footmisc}
\usepackage{booktabs}
\usepackage{dsfont}
\usepackage[utf8]{inputenc}
\usepackage{balance}
\usepackage{siunitx}
\usepackage{multirow}
\usepackage{graphicx}
\usepackage{listings}
\usepackage{commath}
\usepackage{epstopdf}
\usepackage{color}
\usepackage{url}
\usepackage{caption}
\usepackage{alltt}
\usepackage{mathrsfs}
\usepackage{xspace}
\usepackage{float}
\usepackage{subcaption}
\usepackage{graphicx,fancyvrb}
\usepackage{algorithm}
\usepackage{algpseudocode}

\usepackage{amsmath}
\usepackage{amsfonts}
\usepackage{xcolor}
\usepackage{tikz}
\usetikzlibrary{shapes.geometric, backgrounds, fit, calc, intersections, hobby, ext.paths.ortho, arrows.meta, positioning}

\definecolor{abstractDraw}{RGB}{0, 90, 180}
\colorlet{possiblecol}{green!50!black}
\colorlet{possiblecollight}{green!50}
\colorlet{abstractcol}{orange!80!black}
\colorlet{abstractcollight}{orange!10}
\colorlet{operationcolor}{gray!30}
\colorlet{certcol}{blue!50!black}
\colorlet{certcollight}{blue!15}

\usepackage{amsthm}
\usepackage[acronym]{glossaries}
\usepackage{glossaries-extra}
\setkeys{glslink}{hyper=false}

\theoremstyle{definition}

\newtheorem{example}{Example}[section]

\theoremstyle{plain}
\newtheorem{proposition}{Proposition}[section]

\usepackage{booktabs}

\usepackage{mathtools}
\usepackage{caption}
\usepackage{float}
\usepackage{capt-of}
\usepackage{booktabs}
\usepackage{colortbl}
\usepackage{xcolor}
\usepackage{xfrac}
\usepackage{footnote}

 \usepackage{enumitem}
\usepackage{array}
\usepackage{arydshln}

\usepackage{cleveref}

\usepackage{todonotes}

\definecolor{mygreen}{rgb}{0,0.6,0}
\definecolor{myred}{rgb}{0.6,0,0}
\definecolor{mygray}{rgb}{0.5,0.5,0.5}
\definecolor{mymauve}{rgb}{0.58,0,0.82}
\definecolor{myblue}{rgb}{0,0,1}

\usepackage{listings}
\lstnewenvironment{VerbatimText}[1][]{
    
    \lstset{fancyvrb=true,basicstyle=\footnotesize,captionpos=b,xleftmargin=2em,#1}
}{}

\lstdefinestyle{skillprompt}{
    basicstyle=\ttfamily\footnotesize,
    breaklines=true,
    columns=fullflexible,
    frame=single,
    framerule=0.3pt,
    rulecolor=\color{black!30},
    backgroundcolor=\color{black!3},
    xleftmargin=0.5em,
    xrightmargin=0.5em,
    aboveskip=0.7em,
    belowskip=0.7em
}

\usepackage{booktabs}
\usepackage{pgfplots}
\usepackage{pgfplotstable}

\PassOptionsToPackage{hyphens}{url}\usepackage{hyperref}

\usepackage{amsmath}

\newcommand{\sys}{\mbox{\textsc{DivSkill}-SQL}\xspace}
\newcommand{\chasesql}{\mbox{\textsc{CHASE}-SQL}\xspace}
\newcommand{\reforce}{\mbox{\textsc{ReFoRCE}}\xspace}

\newcommand{\dinsql}{\mbox{\textsc{DIN}-SQL}\xspace}

\newcommand{\ignore}[1]{}

\definecolor{black}{rgb}{0,0,0}
\definecolor{grey}{rgb}{0.8,0.8,0.8}
\definecolor{red}{rgb}{1,0,0}
\definecolor{green}{rgb}{0,1,0}
\definecolor{darkgreen}{rgb}{0,0.5,0}
\definecolor{darkpurple}{rgb}{0.5,0,0.5}
\definecolor{darkdarkpurple}{rgb}{0.3,0,0.3}
\definecolor{blue}{rgb}{0,0,1}
\definecolor{shadegreen}{rgb}{0.95,1,0.95}
\definecolor{shadeblue}{rgb}{0.95,0.95,1}
\definecolor{shadered}{rgb}{1,0.85,0.85}
\definecolor{shadegrey}{rgb}{0.85,0.85,0.85}
\definecolor{oddRowGrey}{rgb}{0.80,0.80,0.80}
\definecolor{evenRowGrey}{rgb}{0.85,0.85,0.85}
\definecolor{lightpurple}{rgb}{0.88,1.0,1.0}

\definecolor{syscolor}{rgb}{0.10, 0.37, 0.49}
\definecolor{epstwocolor}{rgb}{0.80, 0.40, 0.10}
\definecolor{epsthreecolor}{rgb}{0.40, 0.15, 0.55}
\definecolor{uncertaincolor}{rgb}{0.5, 0.1, 0.1}
\definecolor{cleancolor}{rgb}{0.20, 0.55, 0.25}

\newcommand{\RNum}[1]{\uppercase\expandafter{\romannumeral #1\relax}}

\newtheorem{col}{Colollary}

\newcommand{\proj}[1]{{\Pi}}
\newcommand{\sel}[1]{{\sigma}}

\newcommand{\cut}[1]{}
\newcommand{\eat}[1]{}

\usepackage{mdframed}

\usepackage[multiple]{footmisc}

\usepackage[most]{tcolorbox}
\usepackage{xcolor}

\newtcolorbox{takeaway}[1][]{
  enhanced,
  breakable,
  colback=black!10,
  colframe=black,
  boxrule=0.7pt,
  left=1.5mm, right=1.5mm, top=0mm, bottom=0mm,
  title={\textcolor{black}{\textbf{Key takeaway:}\,\if\relax\detokenize{#1}\relax\else:~#1\fi}},
  fonttitle=\normalsize,
  attach title to upper,
  before skip=4pt,
  after skip=4pt,
}

\usepackage[most]{tcolorbox}

\definecolor{sqlkw}{RGB}{30,90,170}

\newtcolorbox{seedskill}[1][]{
  enhanced jigsaw,
  breakable,
  sharp corners,
  boxrule=0.4pt,
  colback=gray!4,
  colframe=gray!50,
  colbacktitle=gray!18,
  coltitle=black,
  fonttitle=\bfseries\ttfamily\footnotesize,
  fontupper=\ttfamily\footnotesize,
  title={Seed prompt: \texttt{\detokenize{#1}}},
  attach boxed title to top left={xshift=4pt, yshift=-2pt},
  boxed title style={
    colback=gray!18,
    colframe=gray!18,
    sharp corners,
    boxrule=0pt
  },
  left=5pt,
  right=5pt,
  top=3pt,
  bottom=3pt,
  before skip=4pt,
  after skip=4pt
}

\newtcolorbox{optskill}[1][]{
  enhanced jigsaw,
  breakable,
  sharp corners,
  boxrule=0.4pt,
  colback=blue!3,
  colframe=blue!35,
  colbacktitle=blue!12,
  coltitle=black,
  fonttitle=\bfseries\ttfamily\footnotesize,
  fontupper=\ttfamily\footnotesize,
  title={Optimized prompt: \texttt{\detokenize{#1}}},
  attach boxed title to top left={xshift=4pt, yshift=-2pt},
  boxed title style={
    colback=blue!12,
    colframe=blue!12,
    sharp corners,
    boxrule=0pt
  },
  left=5pt,
  right=5pt,
  top=3pt,
  bottom=3pt,
  before skip=4pt,
  after skip=4pt
}

\usepackage{tikz}
\usetikzlibrary{arrows.meta,positioning,shapes.geometric,fit,backgrounds,decorations.pathreplacing}
\usepackage{wrapfig}

\newacronym{CQA}{CQA}{consistent query answering}
\newacronym{ML}{ML}{machine learning}
\newacronym{cpclean}{CPClean}{CPClean}
\newacronym{MSE}{MSE}{mean squared error}
  
\glsunset{cpclean}
 
  {
  \noindent \textsc{Proof Sketch.}%
  }%
  {\qedsymbol}

\definecolor{lightblue}{RGB}{220,225,255}

\title{Residual Skill Optimization \\ for Text-to-SQL Ensembles}
\renewcommand{\shorttitle}{Residual Skill Optimization for Text-to-SQL Ensembles}

\begin{document}



\newcommand{\affUCSD}{\textsuperscript{\normalfont 1}}
\newcommand{\affSnow}{\textsuperscript{\normalfont 2}}
\newcommand{\markEqual}{\textsuperscript{*}}
\newcommand{\markSnowWork}{\textsuperscript{\textdagger}}
\newcommand{\markSenior}{\textsuperscript{\textdaggerdbl}}

\author{
\begin{tabular}{c}
Jiongli Zhu\affUCSD\markEqual\markSnowWork \quad
Haoquan Guan\affUCSD\markEqual\markSnowWork \quad
Parjanya Prajakta Prashant\affUCSD\markEqual\markSnowWork \quad
Nikki Lijing Kuang\affSnow \\
Seyedeh Baharan Khatami\affUCSD\markSnowWork \quad
Canwen Xu\affSnow \quad
Xiaodong Yu\affSnow \quad
Yingyu Lin\affUCSD \\
Zhewei Yao\affSnow \quad
Yuxiong He\affSnow\markSenior \quad
Babak Salimi\affUCSD\markSnowWork\markSenior \\
\normalfont\textsuperscript{1}University of California, San Diego \quad
\normalfont\textsuperscript{2}Snowflake AI Research
\end{tabular}
}

\maketitle

\begingroup
\renewcommand{\thefootnote}{\fnsymbol{footnote}}
\footnotetext[1]{Equal contribution.}
\footnotetext[2]{Work done while working at Snowflake AI Research.}
\footnotetext[3]{Co-senior authors.}
\endgroup
\setcounter{footnote}{0}


\begin{abstract}

Text-to-SQL ensembles improve over single-candidate generation by drawing multiple SQL candidates and selecting one, but their effectiveness is bounded by Pass@K, the probability that at least one of $K$ candidates is correct. Existing methods source diversity heuristically through stochastic decoding or prompt variants, leaving candidate sets dominated by correlated failures. We present \sys, a residual skill optimization framework that builds complementary agentic Text-to-SQL ensembles without model fine-tuning: each new skill is optimized on examples the current skill ensemble fails on, provably targeting its marginal contribution to Pass@K. On Spider2-Lite, \sys improves selected accuracy by up to $+11.1$ points on Snowflake and $+8.3$ on BigQuery over the strongest ensemble baseline, with consistent gains across two base models (Opus-4.6 and GPT-5.4). Skills optimized on a single dialect transfer without retraining across dialects (Snowflake, BigQuery, SQLite) and to a different task formulation, such as BIRD-Critic ($+2.6$ pts). 
Error diagnostics show up to $3\times$ fewer hallucinated schema references and function calls, indicating that gains come from genuinely reliable complementary skills rather than surface-form variation.
\end{abstract}

\section{Introduction}\label{sec:intro}

Text-to-SQL translates natural language questions into executable SQL queries, making relational databases accessible without SQL expertise. While large language models (LLMs) achieve strong results on standard benchmarks~\citep{yu2018spider,pourreza2023evaluating}, real-world queries, which involve large schemas, dialect-specific syntax, and multi-step query logic, remain difficult. 
This has motivated agentic Text-to-SQL systems that inspect schemas, execute intermediate queries, observe feedback, and iteratively repair errors~\citep{talaei2024chess,deng2025reforce}.

Yet even with multi-step interaction, no single agentic execution reliably solves every query, which has motivated ensembling as a complementary axis of robustness~\citep{pourreza2024chase,yang2025mars,liu2026xiyan,deng2025reforce}: rather than committing to a single SQL generation, ensembles generate multiple SQL candidates and select a final answer. The potential of an ensemble is bounded above by Pass@K, the probability that at least one of $K$ candidates is correct~\citep{chen2021evaluating}: no selector, however good, can recover an answer that was never generated. Pass@K therefore captures what the generation stage is responsible for, separately from selection.

Existing Text-to-SQL ensembles produce candidate diversity by combining stochastic decoding, hand-designed prompt or workflow variants, and in some cases multiple fine-tuned generators~\citep{pourreza2024chase,liu2026xiyan}. Whether the resulting candidates recover different failure cases, however, is left to chance: nothing in the procedure pushes new candidates to solve examples that earlier ones miss. 
In agentic systems this is especially fragile: randomness in early planning propagates through long reasoning trajectories, so high-temperature sampling produces noisier variants of the same path rather than complementary solutions, and introduces unstable reasoning, spurious joins, and dialect errors that degrade the candidate pool~\citep{pourreza2025reasoning}. Pass@K accordingly plateaus, with little gain from additional candidates.

To address this, we introduce \sys, a residual skill optimization framework that makes candidate complementarity an explicit optimization target instead of leaving it to heuristics or chance. A \emph{skill} is a high-level instruction file that controls the agent's decomposition style, schema-exploration policy, drafting strategy, and repair logic; \sys operates entirely at this prompt level, with no model fine-tuning. Starting from a base skill, \sys evaluates the agent on the training set, identifies the unresolved examples, and uses reflective prompt optimization~\citep{agrawal2025gepa} to refine a new skill specifically on this residual; it repeats round by round until a pool of $K$ complementary skills has been learned. 
A new skill need not to be globally better than its predecessors; it is useful precisely when it recovers examples they miss. At inference, \sys runs each learned skill and selects a final SQL from the resulting candidate set via pairwise comparison~\citep{pourreza2024chase}. Each skill is trained to cover what the others miss, so the ensemble is complementary by construction rather than by chance.

\begin{figure*}
    \centering
    \includegraphics[width=\textwidth]{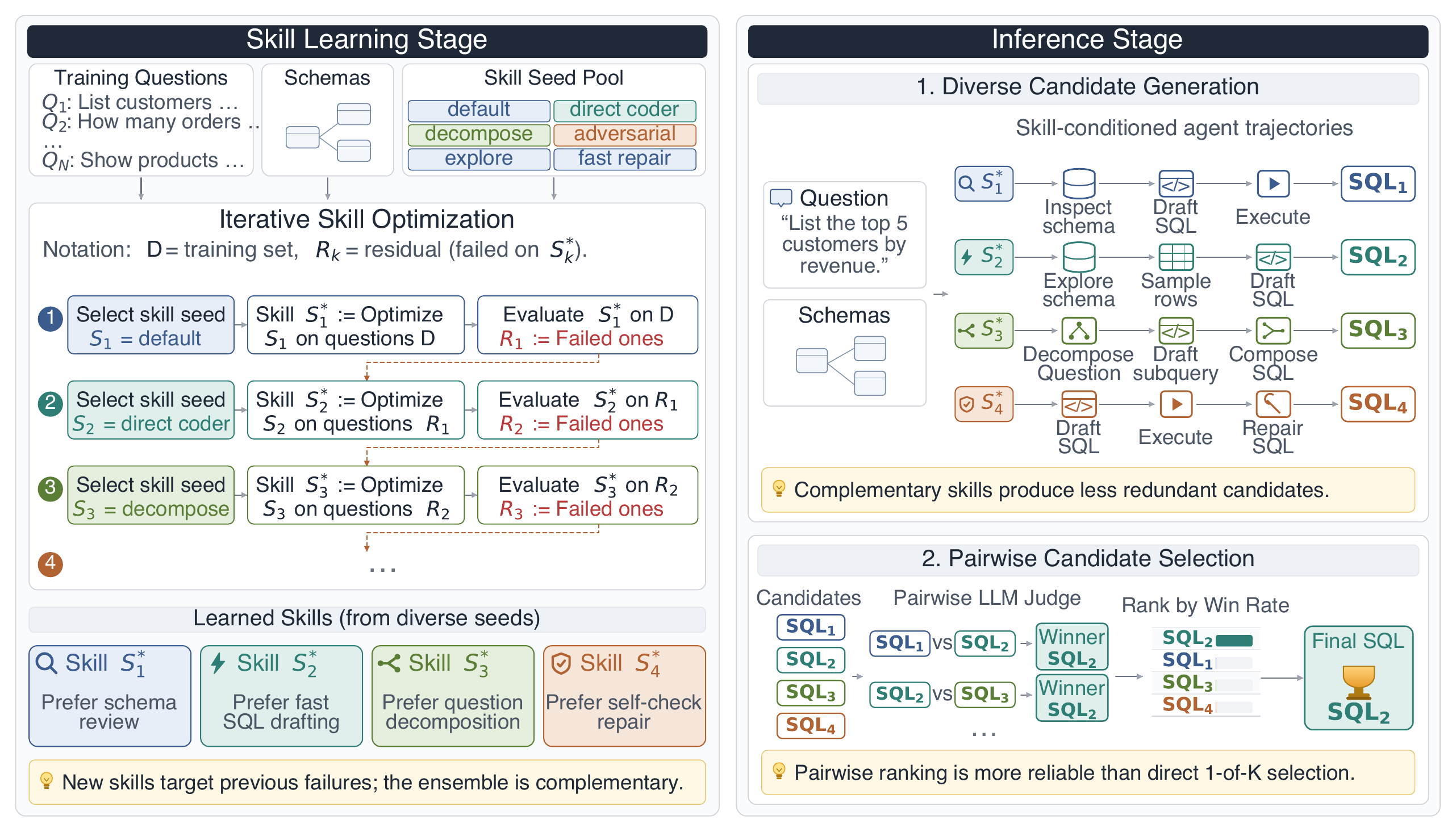}
    \vspace{-3mm}
    \caption{
    System diagram of \sys. 
    The left panel shows skill construction: starting from diverse strategy prompts, the system repeatedly identifies unsolved questions and refines the next skill toward those remaining cases. 
    The right panel shows test-time execution: multiple skill-guided agents solve the same Text-to-SQL problem through different interaction patterns, producing SQL candidates that are then compared to choose the final output. 
    }
    \label{fig:diagram}
    \vspace{-3mm}
\end{figure*}

We evaluate \sys on recent Text-to-SQL benchmarks across four dialects: Spider2-Lite over SQLite, Snowflake, and BigQuery, and BIRD-Critic over PostgreSQL. \sys improves selected accuracy on Spider2-Lite by up to \textbf{+11.1} points on Snowflake and \textbf{+8.3} on BigQuery over the strongest ensemble baseline, with consistent gains across two base models (Opus-4.6 and GPT-5.4). 
Skills optimized on standard Text-to-SQL in a single dialect (Snowflake) transfer without retraining to BigQuery and SQLite. Moreover, skills optimized for standard Text-to-SQL generalize to the debugging-style generation setting of BIRD-Critic, where \sys improves accuracy by \textbf{+2.6}.
Error diagnostics show \textbf{up to 3$\times$} fewer hallucinated schema references and unsupported-function calls, and tool-use trajectory analysis shows that \sys reduces redundancy in agent behavior by $19\%$--$28\%$, producing more diverse schema-inspection, decomposition, drafting, execution, and repair patterns than repeated runs of the same agent.

Our contributions are as follows:
\begin{itemize}
    \item We formulate Text-to-SQL ensembling as residual Pass@K optimization over agent skills, where the goal is to learn complementary behaviors that cover different failure modes.
    \item We propose \sys, a residual skill optimization framework that improves candidate-set coverage without model fine-tuning or high-temperature sampling.
    \item We show that optimizing each new skill on unresolved examples directly targets its marginal contribution to Pass@K.
    \item We evaluate \sys across recent Text-to-SQL benchmarks and four SQL dialects, showing improved accuracy and reduced redundancy in agent trajectories.
\end{itemize}

\section{Related Works}\label{sec:related}
\vspace{-2mm}

\paragraph{Pass@K Optimization}
Pass@K measures whether a model produces at least one correct solution among $K$ samples~\citep{chen2021evaluating}, and has recently been adopted as an optimization target. \citet{yue2025does} show that RL with verifiable rewards improves Pass@1 but not Pass@K: training reduces output diversity and leaves hard problems outside the base model's support unsolved. This motivates directly optimizing for diversity, which several works pursue through policy optimization~\citep{chen2025pass,walder2025pass,yao2025diversity}. However, these methods all require parameter updates, limiting them to open-weight models and to improving a single model's output distribution. 
Our method instead learns $K$ complementary skills without parameter updates, improving Pass@K through ensemble construction for both open and closed models.

\vspace{-2mm}
\paragraph{Skill and Prompt Optimization}
Prompts strongly influence LLM behavior~\citep{brown2020language}, motivating automatic optimization methods that improve prompts from task feedback rather than manual design. Early work optimizes discrete trigger tokens or textual prompts~\citep{shin2020autoprompt}, while recent methods use LLMs to propose, evaluate, and revise prompts iteratively~\citep{zhou2022large,yang2023large,agrawal2025gepa}. Boosted prompt ensembles~\citep{pitis2023boosted} also optimize prompt ensembles by adding few-shot prompts that target examples on which the current ensemble is uncertain or incorrect. Beyond short prompts and few-shot demonstrations, agent systems increasingly use \emph{skills}: modular instruction files encoding task strategies, constraints, and tool-use policies~\citep{zhang2026equipping}. Recent work explores agents that rewrite their skills~\citep{zhou2026memento}, jointly optimize skills with model parameters~\citep{xia2026skillrl}, or evolve and reuse skills over time~\citep{ma2026skillclaw,zhang2026evoskills,zhang2026memskill}. 
Our work follows this direction but targets ensemble construction: rather than learning one strong skill, we learn complementary skills that address different failure modes and improve the ensemble's Pass@K.

\vspace{-2mm}
\paragraph{Text-to-SQL}
Early LLM-based Text-to-SQL methods prompt a single model to generate SQL directly~\citep{dong2023c3,tai2023exploring,10.14778/3681954.3682003,lisigmodcodes,chensigmodreliable}. More recent systems use agent-based pipelines that decompose the task into schema pruning, evidence extraction, SQL generation, execution, and refinement~\citep{pourreza2023din,gao2023text,wang2025mac,talaei2024chess,xie-etal-2024-decomposition,dai-etal-2025-parsql,zhang2025structureguided}. Several works further improve performance through ensembling~\citep{pourreza2024chase,deng2025reforce,yang2025mars,liu2026xiyan,lee-etal-2025-mcs,guo-etal-2025-sqlforge}: CHASE-SQL generates candidates from diverse prompts and selects among them via a tournament~\citep{pourreza2024chase}; MARS-SQL trains a multi-agent system with RL~\citep{yang2025mars,ma2026sqlr,zhai-etal-2025-optimizing}; and XiYan-SQL fine-tunes multiple generators to induce diversity~\citep{liu2026xiyan,yang-etal-2024-synthesizing}. In contrast, our method requires no weight modifications or hand-designed ensembles. 
We optimize skills to explicitly construct complementary generators, increasing Pass@K coverage while remaining applicable to both open and closed LLMs.

\vspace{-3mm}
\section{Method}\label{sec:method}
\vspace{-3mm}
\subsection{Setup and Notation}\label{sec:method:setup}
\vspace{-2mm}
\paragraph{Text-to-SQL with agentic execution.}
A Text-to-SQL instance is a pair $(q, \mathcal{D})$ consisting of a natural-language question $q$ and a relational database $\mathcal{D}$, and the goal is to produce an executable SQL query whose result on $\mathcal{D}$ matches that of a gold reference. We follow the agentic execution paradigm of recent Text-to-SQL systems~\citep{talaei2024chess,deng2025reforce}: rather than generating SQL in a single forward pass, an \emph{agent} interleaves tool calls, such as inspecting schema, sampling rows, drafting candidate SQL, and repairing errors, over multiple steps before returning a final query. 

\vspace{-2mm}
\paragraph{Skills.}
We modulate the agent's behavior through \emph{skills}. A skill $s$ is a high-level instruction file (a system prompt expressed in natural language) that controls the agent's reasoning and tool-use policy: which decomposition style to favor, when to explore the schema versus draft directly, what repair patterns to apply on execution errors, and so on. We write $a_s$ for the agent equipped with skill $s$ and treat $s$ as identified with its prompt $\pi_s$, so that the optimization space $\mathcal{S}$ is the space of natural-language instruction files. Two distinct skills induce two genuinely different agent trajectories on the same input, not merely two stochastic samples of the same trajectory. 

\begin{example}[Skill Examples]\label{ex:seed-skills}
We show two simplified skills below that differ not only in wording but in the agent trajectory they encourage: \texttt{decompose} delays final SQL generation until the query logic has been broken into verified subcomponents, whereas \texttt{direct\_coder} pushes the agent to draft early and rely on execution feedback for rapid repair.
\begin{tcbraster}[raster columns=2, raster column skip=6pt, raster equal height=rows,
  enhanced, sharp corners, boxrule=0.4pt,
  colback=blue!3, colframe=blue!35,
  colbacktitle=blue!12, coltitle=black,
  fonttitle=\bfseries\ttfamily\footnotesize,
  fontupper=\ttfamily\footnotesize,
  left=5pt, right=5pt, top=3pt, bottom=3pt,
  attach boxed title to top left={xshift=4pt, yshift=-2pt},
  boxed title style={colback=blue!12, colframe=blue!12, sharp corners, boxrule=0pt}]
\begin{tcolorbox}[title={decompose skill}]
Break complex questions into simple subqueries, build bottom-up.\\
1. PARSE the question into atomic requirements.\\
2. BUILD each piece as a standalone \textcolor{sqlkw}{CTE}.\\
3. COMPOSE \textcolor{sqlkw}{CTE}s into the final query using \textcolor{sqlkw}{WITH...SELECT}.
\end{tcolorbox}
\begin{tcolorbox}[title={direct\_coder skill}]
You are an EFFICIENT \textcolor{sqlkw}{SQL} writer. Write \textcolor{sqlkw}{SQL} quickly, test, iterate.\\
1. Read the question carefully. Identify the core tables, joins, and aggregations.\\
2. Write your best \textcolor{sqlkw}{SQL} attempt IMMEDIATELY based on the schema.\\
3. Execute it. If errors occur, read the error message carefully and fix it.
\end{tcolorbox}
\end{tcbraster}
\end{example}


\vspace{-4mm}
\paragraph{Notation.}
Let $\mathcal{X}$ denote the space of input tasks $(q, \mathcal{D})$ and let $P$ be the underlying task distribution. For a skill $s \in \mathcal{S}$, we write $p_s(x) \in [0,1]$ for the probability that one execution of $a_s$ on input $x \in \mathcal{X}$ produces a correct SQL query (i.e., a query whose execution result matches the gold reference). For a finite training set $D_{\mathrm{train}} \subseteq \mathcal{X}$ and a subset $R \subseteq D_{\mathrm{train}}$, we write

\vspace{-4mm}
\begin{equation*}\small
\hat{p}_s(R) \;=\; \tfrac{1}{|R|}\sum\nolimits_{x \in R} p_s(x)
\end{equation*}
\vspace{-3mm}

for the empirical success rate of skill $s$ on $R$. 
For a collection of $K$ skills $A = \{s_1, \ldots, s_K\}$, the population $\operatorname{Pass@K}$, defined as the probability that at least one of the $K$ corresponding agent executions succeeds, is

\vspace{-4mm}
\begin{equation*}\small
\operatorname{Pass@K}(A) \;=\; \mathbb{E}_{x \sim P}\!\left[1 - \prod\nolimits_{j=1}^{K}\bigl(1 - p_{s_j}(x)\bigr)\right].
\end{equation*}
\vspace{-3mm}

\vspace{-1mm}
\subsection{Residual Skill Optimization}\label{sec:method:residual}

\vspace{-2mm}
\paragraph{The residual principle.}
As demonstrated in \Cref{fig:diagram}, we construct an ensemble of $K$ skills sequentially. After selecting skills $s_1, \ldots, s_{j-1}$, we define the residual training set

\vspace{-4mm}
\begin{equation*}\small
R_{j-1} \;=\; \bigl\{\, x_i \in D_{\mathrm{train}} : a_{s_\ell}\text{ fails on } x_i \;\;\forall \ell < j \,\bigr\},
\end{equation*}
and pick the next skill by maximizing success on this residual:

\vspace{-4mm}
\begin{equation*}\small
s_j \;\in\; \arg\max\nolimits_{s \in \mathcal{S}} \;\hat{p}_s(R_{j-1}).
\end{equation*}
\vspace{-4mm}

Later skills are therefore not pushed to be globally better than earlier ones. They are pushed to cover examples the current ensemble misses---which is exactly the marginal contribution of a new skill to $\operatorname{Pass@K}$. This is the mechanism by which the procedure encourages complementary skills and directly targets ensemble coverage rather than average accuracy.

\vspace{-3mm}
\paragraph{The \sys algorithm.}
The residual arg-max above is an idealized objective: it assumes access to the full training distribution and optimization over the infinite space of natural-language skills. \sys turns this principle into a practical batch-sequential process presented in Algorithm~\ref{alg:sys}. 
At each of $T$ rounds, \sys\ draws a fresh batch $B_t \subseteq D_{\mathrm{train}}$ and performs one pass of the residual principle over the seed pool: 
skills are drawn from $\mathcal{S}_0$ in randomized ordering; after each per-skill $\operatorname{SkillOptimizer}$ call, examples newly solved by the updated skill are removed from the residual set; and at the end of the batch, the accepted prompt updates are committed back to the seed pool, thus the pool evolves from batch to batch.
The algorithm has two main ingredients that make the ideal residual arg-max practical: a finite set of \emph{diverse seed skills} that defines the initial search space, and an \emph{inner-loop optimizer} that refines each seed on the current residual failures.

\begin{algorithm}[t]\small
\caption{\sys: batch-sequential residual skill optimization.}
\label{alg:sys}
\begin{algorithmic}[1]
\Require training question pool $\mathcal{D}_\mathrm{train}$; seed skill pool $\mathcal{S}_0 = \{s_1, \ldots, s_K\}$ with prompts $\{\pi_s^{(0)}\}_{s \in \mathcal{S}_0}$; batch size $b$; number of batches $T$; skill optimizer $\operatorname{SkillOptimizer}(\pi, R)$ that returns a refined prompt.
\State $\pi_s \gets \pi_s^{(0)}$ for each $s \in \mathcal{S}_0$ \Comment{current seed pool}
\For{$t = 1, \ldots, T$}
    \State sample a batch $B_t \subseteq \mathcal{D}_\mathrm{train}$ with $|B_t| = b$ \Comment{draw from question pool}
    \State $R_{t,0} \gets B_t$ \Comment{initial residual}
    \State choose an ordering $\sigma_t : [K] \to \mathcal{S}_0$ of the seed pool \Comment{without replacement}
    \For{$j = 1, \ldots, K$}
        \State $s \gets \sigma_t(j)$ \Comment{draw next seed}
        \State $\widetilde{\pi}_s \gets \operatorname{SkillOptimizer}(\pi_s; R_{t,j-1})$ \Comment{optimize on residual}
        \If{$\hat{p}_{\widetilde{\pi}_s}(R_{t,j-1}) > \hat{p}_{\pi_s}(R_{t,j-1})$}
            \State $\pi_s \gets \widetilde{\pi}_s$ \Comment{accept update on the residual}
        \EndIf
        \State $R_{t,j} \gets \{\, x \in R_{t,j-1} : a_s \text{ fails on } x \,\}$ \Comment{update residual with new $\pi_s$}
    \EndFor
    \State $\pi_s^{(t)} \gets \pi_s$ for each $s \in \mathcal{S}_0$ \Comment{commit seed-pool update}
\EndFor
\State \Return $\bigl\{ \pi_s^{(T)} \bigr\}_{s \in \mathcal{S}_0}$
\end{algorithmic}
\end{algorithm}

\vspace{-3mm}
\paragraph{Skill seed pool.}
The arg-max in $\arg\max_{s \in \mathcal{S}} \hat{p}_s(R_{j-1})$ is over an infinite, unstructured space of natural-language prompts and is intractable directly. To make the inner optimization tractable, we initialize from a small \emph{seed pool} $\mathcal{S}_0 = \{s_1^{(0)}, \ldots, s_K^{(0)}\}$ of $K$ 
LLM-assisted, manually curated skills,
each encoding a distinct high-level reasoning strategy.
Concretely, we first run a generic LLM agent on a subset of the training data and inspect representative and recurring failure modes, and propose diverse strategy prompts from several perspectives: whether to decompose the question before coding, how much schema and value exploration to perform before drafting SQL, etc. We inspect and keep only strategies with distinct intended trajectories, removing near-duplicates that differ only in wording. 
\Cref{ex:seed-skills} presents two simple, high-level skills we adopted as seeds, and the comprehensive list of seed skills is provided in Appendix~\ref{app:skill-pool}.
Subsequent rounds optimize prompts \emph{starting from} the seed pool rather than from scratch, restricting the effective search space to natural refinements of these high-level strategies.

\vspace{-3mm}
\paragraph{Inner-loop optimization.}
The inner loop uses an LLM to propose refined skill prompts based on observed agent failures, a standard reflective prompt-optimization step~\citep{fernando2023promptbreeder,zhou2022large,yang2023large, agrawal2025gepa}. At each round, the LLM is shown failure traces from running the current skill prompt $\pi_s$ on the residual set $R_{t,j-1}$, proposes a refined prompt $\widetilde{\pi}_s$ that addresses those failures, and the refinement is accepted if it improves recovery on $R_{t,j-1}$. The agentic setting introduces non-trivial structure: each failure trace contains a full sequence of schema inspections, intermediate query executions, and repair attempts, providing rich diagnostic signal but also requiring the optimizer to attribute blame across multi-step trajectories rather than to a single output. The \sys-specific element is the choice of training subset: by optimizing each skill against the residual rather than the full training set, refinements are biased toward recovering prior-ensemble failures rather than improving average accuracy on already-solved examples. 

The following example of the \texttt{decompose} skill evolution illustrates how residual optimization turns diversity in the seed pool into more targeted coverage in the inner loop optimization.

\vspace{-1mm}
\begin{example}[Decompose Skill Evolution]\label{ex:evolution-decompose-skill}
The \texttt{decompose} seed skill in \Cref{ex:seed-skills} already differs from skills such as \texttt{direct\_coder}: it encourages the agent to solve a query by breaking it into subproblems rather than drafting SQL immediately and repairing it based on execution feedback. 
However, the seed remains too high-level to specify what the agent should verify during decomposition. 
When optimized on the residual set, the prompt is refined precisely around the frequent kinds of errors that other skills leave unresolved. 
The example below shows the optimized \texttt{decompose} skill.

\vspace{-2mm}
\begin{tcolorbox}[
  enhanced, sharp corners, boxrule=0.4pt,
  colback=blue!3, colframe=blue!35,
  colbacktitle=blue!12, coltitle=black,
  fonttitle=\bfseries\ttfamily\footnotesize,
  fontupper=\ttfamily\footnotesize,
  title={optimized decompose skill},
  attach boxed title to top left={xshift=4pt, yshift=-2pt},
  boxed title style={colback=blue!12, colframe=blue!12, sharp corners, boxrule=0pt},
  left=5pt, right=5pt, top=3pt, bottom=3pt]
Break complex questions into simple subqueries, build bottom-up.\\[3pt]
\textbf{Step 1: PARSE the question into atomic requirements}\\
- What is the output? columns, derived metrics, ratios, counts, sums.\\
- What is the grain? one row per what? month? patient? group?\\
- What grouping dimensions are needed? only those that match the desired grain.\\
- Is there ranking, ordering, limiting, or a ratio/composition calculation?\\[3pt]
\textbf{Step 2: ANCHOR the grain before grouping}\\
- Match \textcolor{sqlkw}{GROUP BY} columns precisely to the output grain: no more, no less.\\
- If the question asks for monthly totals, group by month only; do not add\\
\hspace*{1.2em}route, city, or other columns unless explicitly requested.\\
- If the question asks for a split, identify the correct column and values that\\
\hspace*{1.2em}represent that split; do not substitute a loosely related column.\\[3pt]
\textbf{Step 3: BUILD each piece as a standalone \textcolor{sqlkw}{CTE}}\\
- Add joins one at a time, verifying row counts do not explode.\\
- For ratio/composition queries, compute totals in one \textcolor{sqlkw}{CTE}, subgroup counts in\\
\hspace*{1.2em}another, then \textcolor{sqlkw}{JOIN} and divide.\\[3pt]
\textbf{Step 4: VERIFY join logic and filter semantics}\\
- Confirm join keys actually link the intended entities; avoid fan-out.\\
- Confirm filter values match the domain as they appear in the data.
\end{tcolorbox}
\vspace{-1mm}
After optimization, it evolves from generic advice to ``build CTEs'' into a concrete rubric: identify the intended output grain before grouping, match \texttt{GROUP BY} columns to that grain, choose the correct column for requested splits, and compute ratio/composition queries using compatible aggregation levels. Thus, the learned skill is not merely a more detailed version of the seed prompt; its added details are shaped by the residual cases it is meant to cover. The full before/after skill comparison pool is provided in Appendix~\ref{app:skill-pool}.
\end{example}
\vspace{-2mm}

In practice, we further improve the finite-batch learning procedure through careful reflection prompt and reward design, rotating the skill order, etc.; Appendix~\ref{app:impl:practices} discusses these implementation practices in detail.



\vspace{-2mm}
\paragraph{Population-level guarantee.}
Under the population-level objective, residual skill optimization is greedy maximization of the Pass@K coverage objective. Formally, Proposition~\ref{proposition:pass-at-k} (stated and proved in Appendix~\ref{app:proposition-pass-at-k-proof}) shows that, in the population limit,

\vspace{-5mm}
\begin{equation*}\small
\operatorname{Pass@K}(\{s_1,\ldots,s_K\})
\;\geq\;
(1-1/e)\max_{|A|\le K}\operatorname{Pass@K}(A).
\end{equation*}
\vspace{-4mm}

Thus, the learned skill bank is guaranteed to achieve at least a constant-factor fraction of the best possible \(K\)-skill ensemble under the population objective. The intuition is that the marginal value of a new skill lies in its ability to solve problems that the current skill set still fails to address. Residual optimization therefore greedily adds the skill with the largest additional coverage of the remaining failure, yielding the standard approximation guarantee for monotone submodular maximization~\citep{nemhauser1978analysis}.

\vspace{-3mm}
\subsection{Inference}\label{sec:method:inference}
\vspace{-3mm}
At inference time, the $K$ learned skill-conditioned agents run in parallel on each test instance, producing $K$ candidate SQL queries. We select the final query using pairwise candidate comparisons following \citet{pourreza2024chase}. 
Since pairwise comparison scales quadratically in $K$, we first deduplicate candidates by execution output: queries returning identical results on the target database are collapsed into one equivalence class, with one representative retained. This leaves $G \le K$ candidates and reduces the number of comparisons when multiple skills agree. 
We then run an exhaustive round-robin over all $\binom{G}{2}$ unordered pairs, rather than sampling pairs, because $G$ is small in practice.\footnote{With $K=8$, $G\approx 1.7$ for baselines and $2.6$ for \sys on BIRD-Critic.} 
To mitigate LLM judge position bias, each pair $(i,j)$ is judged twice with swapped presentation order. Each judgment gives one win to the selected candidate, and we return the candidate with the highest win count, breaking ties arbitrarily.

\vspace{-3mm}
\section{Experiments}\label{sec:exp}
\vspace{-3mm}

In the experiments, we evaluate \sys through three research questions:

\vspace{-4mm}
\newtcolorbox{RQBox}{
    enhanced,
    colback=gray!5,       
    colframe=blue!40!black, 
    arc=0mm,                
    boxrule=0pt,            
    leftrule=3pt,           
    fonttitle=\bfseries\sffamily,
    coltitle=blue!40!black,
    attach title to upper,
    after title={\medskip\hrule\medskip}, 
    left=4mm,               
    right=4mm,
    top=2mm,
    bottom=2mm,
    before skip=15pt,
    after skip=15pt
}

\begin{RQBox}
    \textbf{RQ1: End-to-end effectiveness.} How does \sys compare with state-of-the-art Text-to-SQL systems on recent complex benchmarks?
    
    \textbf{RQ2: Transferability.} Do skills optimized on a single dataset or SQL dialect transfer to unseen datasets, alternative dialects, and new task formats?
    
    \textbf{RQ3: Behavioral diversity.} How does residual skill optimization change the behavior of an agentic Text-to-SQL system, beyond simply changing final SQL strings?
\end{RQBox}
\vspace{-3mm}

\begin{table}
\centering
\footnotesize
\setlength{\tabcolsep}{3pt}
\vspace{-4mm}
\begin{tabular}{p{3.5cm}p{5.3cm}}
\toprule
Seed skill & Core idea \\
\midrule
\texttt{default} & Balanced exploration and testing \\
\texttt{explore\_heavy} & Thorough data profiling before drafting \\
\texttt{direct\_coder} & Quick draft, and refine incrementally \\
\texttt{decompose} & Build from validated substeps like joins \\
\texttt{conservative} & Use the simplest faithful query \\
\texttt{adversarial\_checker} & Stress-test joins, filters, grouping, etc. \\
\texttt{template\_first} & Start from templates and adapt \\
\texttt{fast\_error\_repair} & Repair concrete errors one at a time \\
\bottomrule
\end{tabular}
\caption{Summary of initial seed skills.}
\vspace{-8mm}
\label{tab:seed-pool}
\end{table}

\vspace{-3mm}
\subsection{Experimental Setup}\label{exp:setup}



\vspace{-2mm}
\paragraph{Benchmarks.}
We evaluate \sys on two recent Text-to-SQL benchmarks that provide clean ground-truth annotations and involve complex reasoning. First, we use Spider2-Lite~\citep{lei2024spider}, which tests complex SQL generation over realistic schemas and multiple dialects. We report results on its SQLite, Snowflake, and BigQuery subsets, with 135, 207, and 209 examples respectively. These subsets differ in schema organization, query style, and dialect-specific syntax, allowing us to study both in-domain performance and cross-dialect transfer.
Second, we evaluate on BIRD-Critic~\citep{li2025swe}, a SQL debugging benchmark derived from BIRD~\citep{li2024can}. Each instance provides a natural-language issue description, a buggy SQL query, and database context; the system must diagnose and repair the query rather than generate SQL from scratch. This setting tests whether \sys transfers beyond direct SQL generation to agentic SQL correction. We evaluate on its pure PostgreSQL version.

\vspace{-2mm}
\paragraph{Skill optimization.}
We adopt the state-of-the-art prompt and skill optimization technique GEPA~\citep{agrawal2025gepa} for skill optimization.
For the Spider2-Lite benchmark, we learn skills from only approximately 200 examples sampled from proprietary data in Snowflake SQL dialect, and then evaluate the learned skills on Spider2-Lite. 
For the BIRD-Critic benchmark, we optimize skills on BIRD-mini-dev~\cite{li2024can}, which contains 500 standard Text-to-SQL examples. 


\vspace{-2mm}
\paragraph{Baselines.}
We compare \sys against representative open-source or most relevant Text-to-SQL systems, including \dinsql~\citep{pourreza2023din},
\reforce~\citep{deng2025reforce}, and \chasesql~\citep{pourreza2024chase}. 
We use the open-source implementations of \dinsql and \reforce directly.
Since \chasesql was originally designed for a workflow-based pipeline, to adapt it to our agentic setting, we retain its transferable design choices, including schema-link shuffling, high-temperature sampling, and pairwise candidate selection, while replacing the fixed, manually designed workflow with our agent architecture\footnote{In our experiments, the workflow-based solution consistently underperforms the agentic solution unless heavy engineering effort is applied, which led us to build around the agentic approach.}.
By default, we use the Opus-4.6\cite{anthropic2026opus46} model. However, we also show results with GPT-5.4~\cite{openai2026gpt54}.
We share the implementation details in \Cref{app:impl}.

\vspace{-2mm}
\paragraph{Metrics.}
We report three metrics. \emph{Pass@1} measures the execution accuracy of a single generated candidate. \emph{Pass@8} measures the oracle candidate-set accuracy: a problem is counted as solved if at least one of the eight generated candidates is correct. Pass@8 therefore measures the quality and coverage of candidate generation independently of selection\footnote{We compute Pass@8 based on internal candidates of ensemble-based methods. As \reforce is generating a dynamic number of candidates in its workflow, we are not able to compute Pass@8.}.
Finally, \emph{selected accuracy} measures the execution accuracy of the single SQL query returned by the selector, either through LLM judge or majority voting, and is the end-to-end performance of the deployed system. 
This applies to ensemble-based methods including \sys, \reforce, and \chasesql.

\paragraph{Skill seed pool and choice of $K$.}
We observe the Pass@$k$ curve for various methods nearly saturates beyond $K{=}8$ empirically. Therefore, unless explicitly specified, we use $K{=}8$ by default in the experiments.
We initialize the optimization with $K{=}8$ seed skills (see seed construction in \Cref{sec:method}), each representing a distinct agent behavior family.
Table~\ref{tab:seed-pool} lists all seeds with a one-line strategy summary; full prompt text is given in Appendix~\ref{app:skill-pool}.


\vspace{-3mm}
\subsection{End-to-End Performance and Transferability}
\vspace{-2mm}
\begin{table*}[t]
\centering
\small
\setlength{\tabcolsep}{3pt}

\begin{subtable}{\textwidth}
\centering
\begin{tabular}{lccccccccc}
\toprule
& \multicolumn{3}{c}{SQLite} & \multicolumn{3}{c}{Snowflake} & \multicolumn{3}{c}{BigQuery} \\
\cmidrule(lr){2-4}\cmidrule(lr){5-7}\cmidrule(lr){8-10}
Method & pass@1 & pass@8 & Sel. acc. & pass@1 & pass@8 & Sel. acc. & pass@1 & pass@8 & Sel. acc. \\
\midrule
\dinsql & 40.74 & / & / & 0.97 & / & / & 18.54 & / & / \\
\reforce & 55.28 & / & 57.78 & 43.47 & / & 50.72 & 51.10 & / & 55.12 \\
\chasesql & 62.59 & 76.30 & 63.70 & 51.21 & 68.60 & 53.14 & 59.02 & 71.71 & 56.59 \\
\rowcolor{lightblue!50}
\sys & 62.13 & 73.33 & \textbf{64.44} & 60.08 & 72.46 & \textbf{64.25} & 62.07 & 73.17 & \textbf{64.88} \\
\bottomrule
\end{tabular}
\caption{}
\label{tab:exp:spider2lite-opus}
\end{subtable}

\vspace{1mm}

\begin{subtable}{\textwidth}
\centering
\begin{tabular}{lccccccccc}
\toprule
& \multicolumn{3}{c}{SQLite} & \multicolumn{3}{c}{Snowflake} & \multicolumn{3}{c}{BigQuery} \\
\cmidrule(lr){2-4}\cmidrule(lr){5-7}\cmidrule(lr){8-10}
Method & pass@1 & pass@8 & Sel. acc. & pass@1 & pass@8 & Sel. acc. & pass@1 & pass@8 & Sel. acc. \\
\midrule
\dinsql & 37.04 & / & / & 0 & / & / & 2.93 & / & / \\
\reforce & 64.44 & / & 58.52 & 47.83 & / & 41.06 & 48.29 & / & 51.22 \\
\chasesql & 61.57 & 83.70 & 66.67 & 51.69 & 73.91 & 57.97 & 53.90 & 74.15 & 59.51 \\
\rowcolor{lightblue!50}
\sys & 62.87 & 84.44 & \textbf{71.85} & 52.23 & 76.33 & \textbf{61.84} & 57.80 & 77.56 & \textbf{63.41} \\
\bottomrule
\end{tabular}
\caption{}
\label{tab:exp:spider2lite-gpt}
\end{subtable}

\vspace{-1mm}
\caption{Spider2-Lite results across SQLite, Snowflake, and BigQuery using (a) Opus 4.6 and (b) GPT-5.4.}
\label{tab:exp:spider2lite}
\vspace{-2mm}
\end{table*}

\begin{table*}[t]
\centering
\small
\setlength{\tabcolsep}{4pt}
\begin{tabular}{lccc}
\toprule
Hallucination Type & \chasesql & \sys & Ratio \\
\midrule
Pools with invalid-reference candidate & 10 & \textbf{7} & 1.43x \\
Solvable pools with invalid-reference candidate & 6 & \textbf{2} & 3.00x \\
Invalid-reference candidate slots & 19 & \textbf{13} & 1.46x \\
Missing-function hallucination cases & 6 & \textbf{2} & 3.00x \\
\bottomrule
\end{tabular}
\vspace{-1mm}
\caption{Hallucination diagnostics on Snowflake instances based on invalid-reference failures.}
\label{tab:chasesql-gpt54-snowflake-hallucination}
\vspace{-4mm}
\end{table*}

\begin{table*}[t]
\centering
\small
\setlength{\tabcolsep}{4pt}
\begin{tabular}{lccc}
\toprule
Structural Mismatch vs.\ Gold SQL & \chasesql & \sys & Ratio \\
\midrule
Wrong \textsc{distinct} usage & 15 & \textbf{10} & 1.50x \\
Wrong window-function usage & 6 & \textbf{2} & 3.00x \\
Wrong \textsc{union} structure & 6 & \textbf{3} & 2.00x \\
\bottomrule
\end{tabular}
\vspace{-1mm}
\caption{Structural mismatch analysis on Snowflake instances with reference SQL available.}
\label{tab:chasesql-gpt54-snowflake-gold-followup}
\vspace{-3mm}
\end{table*}

\Cref{tab:exp:spider2lite-opus,tab:exp:spider2lite-gpt,tab:bird-critic:pg} report end-to-end results on BIRD-Critic and Spider2-Lite, grouped by dialect. 
Overall, \sys achieves the strongest selected accuracy in almost all settings and consistently outperforms the strongest ensemble baseline, \chasesql.

The advantage is most visible on the harder dialects in Table~\ref{tab:exp:spider2lite-opus}. 
With Opus-4.6, \sys improves selected accuracy over \chasesql by +11.11 on Snowflake and +8.29 on BigQuery, where large schemas and dialect-specific syntax make unstable decoding more costly. 
On SQLite, \sys has slightly lower Pass@8 than \chasesql (73.33 vs.\ 76.30), but still achieves higher selected accuracy (64.44 vs.\ 63.70). 
This suggests that raw coverage alone is insufficient: SQLite is easier, so stochastic sampling can already cover many cases, but the additional candidates may include plausible wrong queries that are harder for the selector to distinguish~\cite{pourreza2025reasoning}. 
The same pattern holds with GPT-5.4 in Table~\ref{tab:exp:spider2lite-gpt}, where \sys improves both Pass@8 and selected accuracy across all three dialects.
Note that \reforce selects its final SQL by majority voting over internal candidates. 
With GPT-5.4, its selected result underperforms mean Pass@1 on SQLite and Snowflake, showing that majority voting can fail when many candidates converge to the same incorrect result. 
See Appendix~\ref{app:additional-exp} for detailed analysis.

\vspace{-3mm}
\paragraph{Hallucination and Error Analysis.}\Cref{tab:chasesql-gpt54-snowflake-hallucination,tab:chasesql-gpt54-snowflake-gold-followup} provide a more structured error breakdown, evaluated on the Snowflake part of Spider2-Lite. 
Compared with \chasesql, \sys produces fewer invalid-reference failures: the number of pools containing such a candidate drops from 10 to 7, and among solvable pools, the count drops from 6 to 2. 
Missing-function hallucinations also decrease from 6 to 2. The structural comparison against gold SQL shows a similar pattern: \sys makes fewer errors involving \textsc{distinct}, window functions, and \textsc{union} structure. 
These results indicate that residual skills \emph{improve diversity in a more controlled way}. Rather than relying on high-temperature perturbations that can introduce hallucinated references or unstable SQL structures, \sys induces different agent behaviors while preserving candidate quality.

\vspace{-3mm}
\paragraph{Transfer across SQL dialects and task settings.}
We next analyze whether skills encode reusable problem-solving strategies or merely overfit to one dialect or task. 



Recall that on Spider2-Lite, the skills are optimized only on Snowflake SQL format data, but are applied to all dialects without further optimization. 
The end-to-end results in \Cref{tab:exp:spider2lite-opus,tab:exp:spider2lite-gpt} show that \sys outperforms all baselines on SQLite and BigQuery by a large margin. 
This suggests that residual skill optimization does not merely memorize Snowflake-specific syntax. Instead, as we will show later in \Cref{exp:trajectory}, the learned skills capture higher-level strategies for schema exploration, decomposition, query construction, and error checking that transfer across dialects.

\begin{wraptable}[7]{r}{0.4\textwidth}
    \centering
    \footnotesize
    \setlength{\tabcolsep}{4pt}
    \vspace{-4mm}
    \begin{tabular}{lccc}
        \toprule
        \textbf{Method} & \textbf{pass@1} & \textbf{pass@8} & \textbf{Sel. acc.} \\
        \midrule
        \dinsql   &  30.38    & --    & --    \\
        \reforce  & 41.50 & -- & 43.21 \\
        \chasesql &  44.69 & 52.26  & 46.23  \\
        \rowcolor{lightblue!50}
        \sys      & 46.16  & 54.53  &  \textbf{48.87} \\
        \bottomrule
    \end{tabular}
    \vspace{-2mm}
    \caption{Bird-Critic PostgreSQL Results.}
    \label{tab:bird-critic:pg}

\end{wraptable}
Although all text-to-SQL systems, datasets, and benchmarks target accurate SQL generation given a user's question and a database, their settings vary widely. For instance, BIRD-Critic focuses on debugging SQL using feedback and a given buggy SQL query, which is quite different from existing benchmarks. In this case, finding the training data in the same format is challenging. 
In this experiment, we explore whether the skills learned on Bird-mini-dev---a regular Text-to-SQL dataset with no feedback or buggy SQL---translate well to Bird-Critic.
Results in Table~\ref{tab:bird-critic:pg} confirm that the skills transfer: \sys improves selected accuracy by $+2.64$ points over \chasesql, with consistent gains in pass@1 ($+1.47$) and pass@8 ($+2.27$). 
The simultaneous pass@8 improvement indicates that the learned skills broaden the candidate pool, even on a task format (debugging from feedback) they were not optimized for.

\begin{figure}[t]
    \centering
    \begin{subfigure}[b]{0.4\linewidth}
        \centering
        \includegraphics[width=\linewidth]{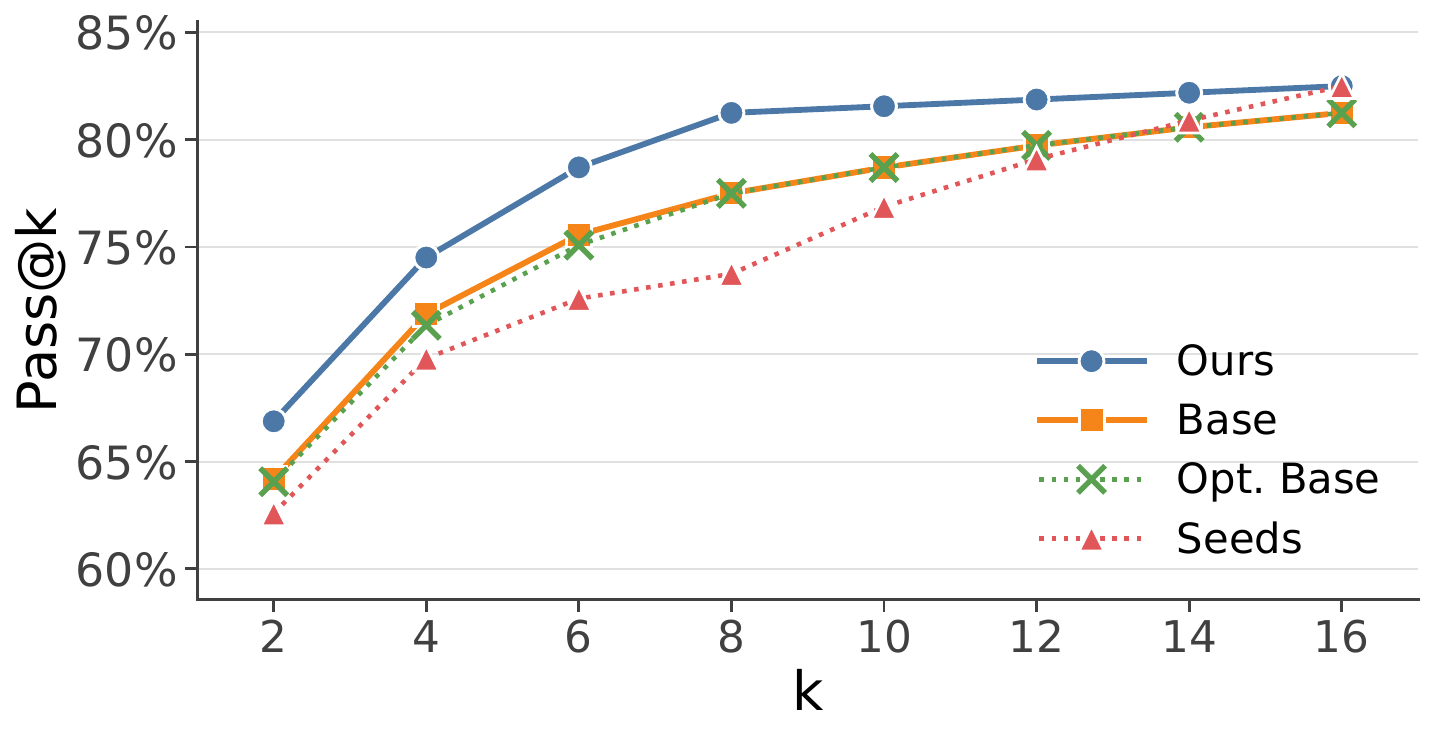}
        \vspace{-7mm}
        \caption{}
        \vspace{-3mm}
        \label{fig:passk:sf}
    \end{subfigure}
    \hspace{7mm}
    \begin{subfigure}[b]{0.4\linewidth}
        \centering
        \includegraphics[width=\linewidth]{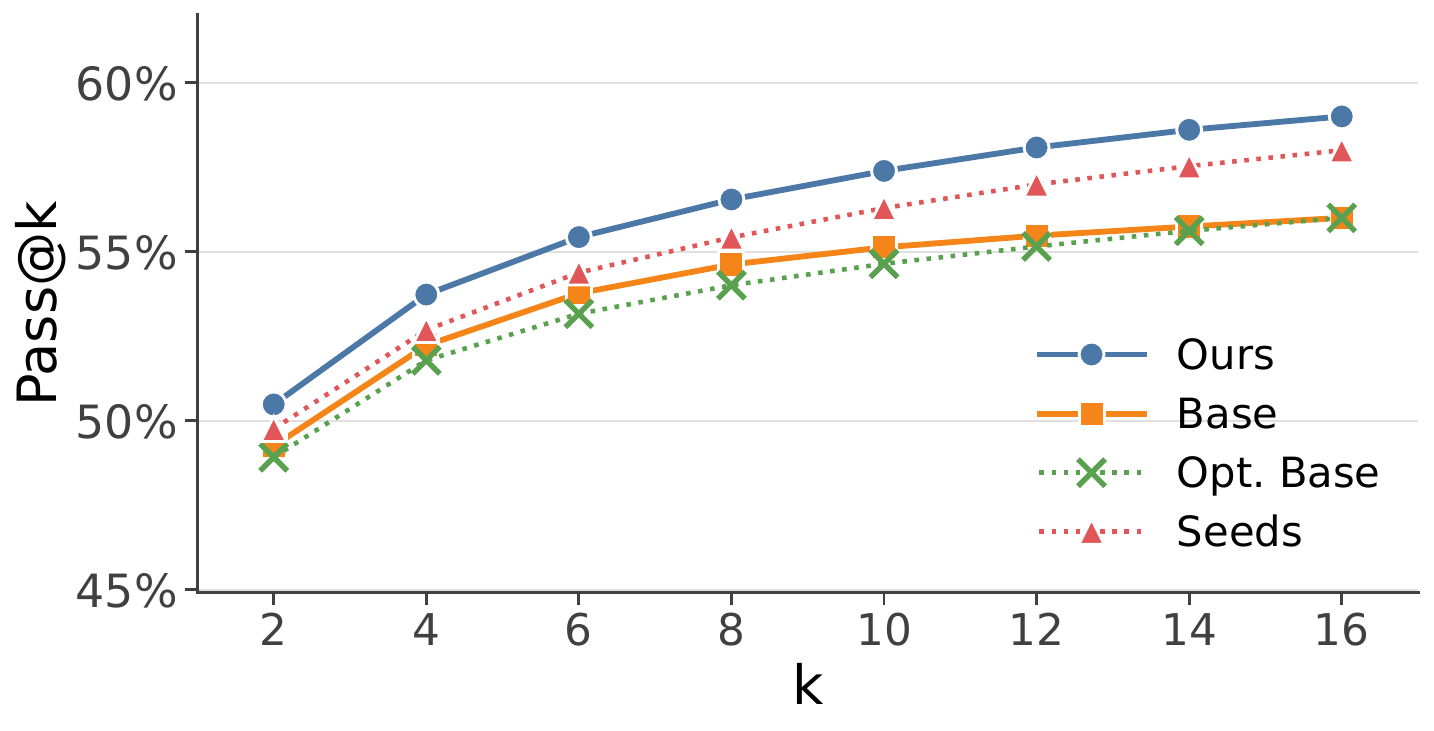}
        \vspace{-7mm}
        \caption{}
        \vspace{-3mm}
        \label{fig:passk:bird}
    \end{subfigure}
    \caption{Pass@k comparison between \sys and its variants on 100-instance subsets of a) Spider2-lite and b) Bird-Critic.}
    \label{fig:passk}
    \vspace{-3mm}
\end{figure}



\begin{figure*}[t]
    \centering

    \includegraphics[
        width=0.95\textwidth,
        height=0.26\textheight,
        keepaspectratio
    ]{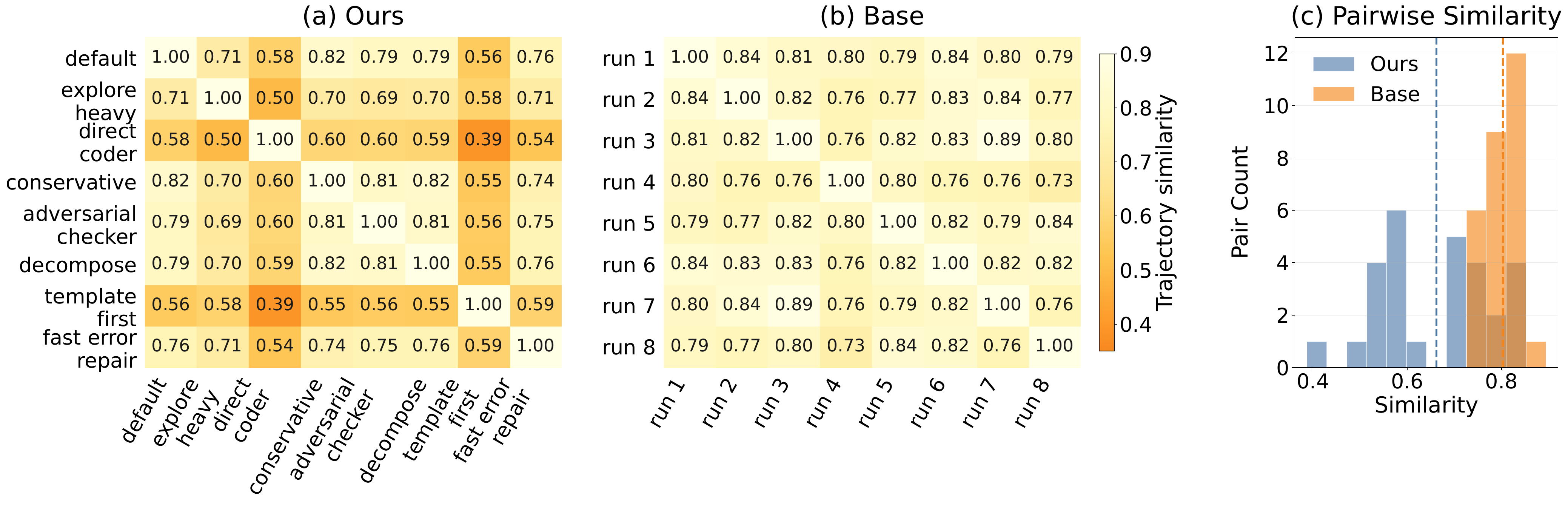}
    \vspace{-5mm}
    \caption{Trajectory comparison between \sys and repeated runs of the default skill on the Snowflake part of Spider2-Lite.}
    \label{fig:trajectory-comp-sf}


    \includegraphics[
        width=0.95\textwidth,
        height=0.26\textheight,
        keepaspectratio
    ]{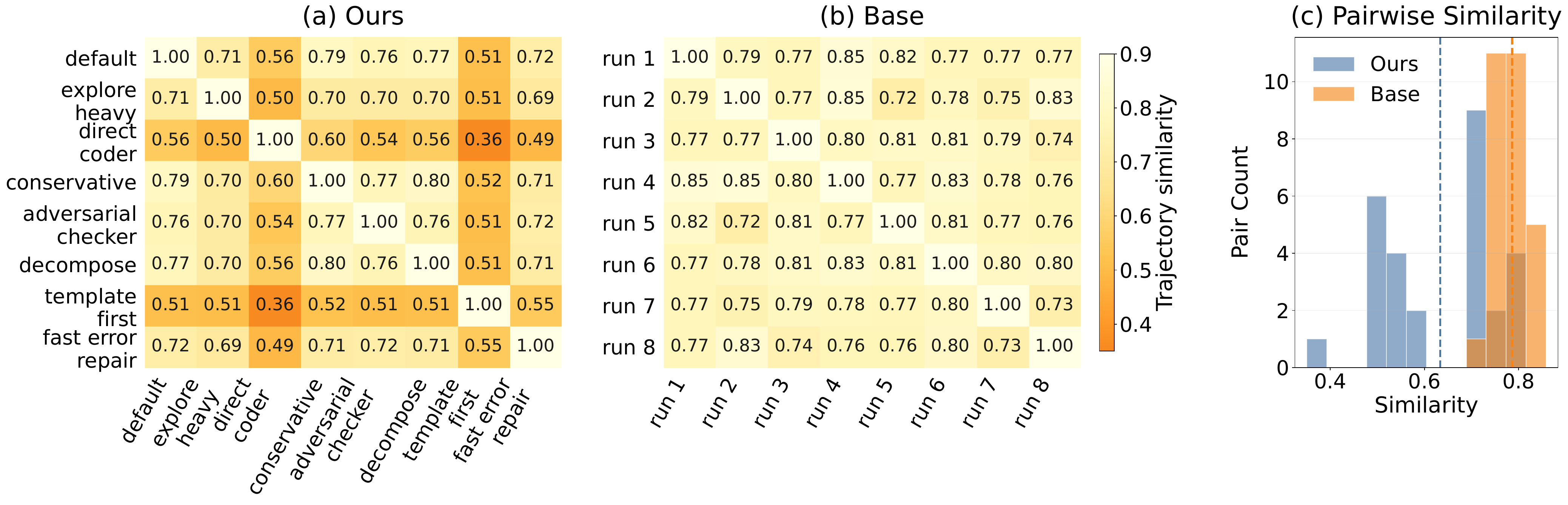}
    \vspace{-5mm}
    \caption{Trajectory comparison between \sys and repeated runs of the default skill on the BigQuery part of Spider2-Lite.}
    \label{fig:trajectory-comp-bq}


    \includegraphics[
        width=0.95\textwidth,
        height=0.26\textheight,
        keepaspectratio
    ]{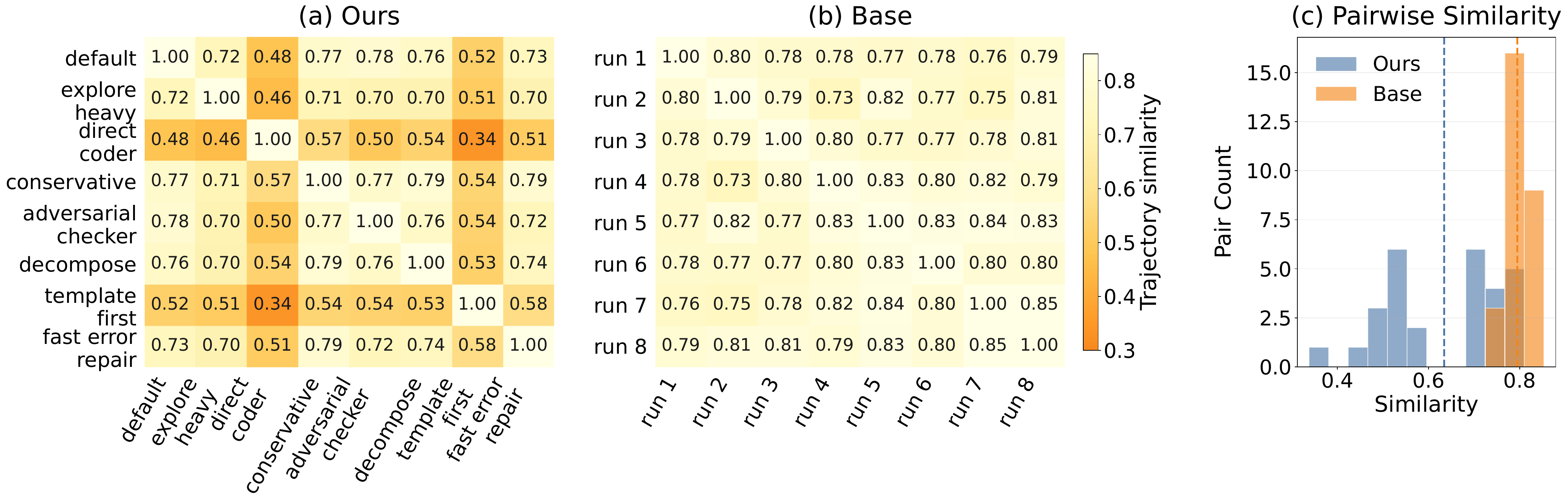}
    \vspace{-5mm}
    \caption{Trajectory comparison between \sys and repeated runs of the default skill on the SQLite part of Spider2-Lite.}
    \label{fig:trajectory-comp-sqlite}

\end{figure*}

\vspace{-3mm}
\subsection{Ablation Studies}
\vspace{-2mm}

To understand the effectiveness of each component of \sys, including the residual optimization process and the skill pool, we compare \sys with its three variants: \emph{Base}, which repeatedly samples from the original base skill; \emph{Opt. Base}, which runs GEPA to optimize residuals from only base skills rather than diverse seeds; and \emph{Seeds}, which uses the unoptimized initial skill seeds.
These baselines separate the effect of residual optimization from the effect of simply sampling more times, optimizing one stronger prompt, or using manually diverse initial instructions.


Figure~\ref{fig:passk} studies how candidate-set coverage changes as we increase the number of generated candidates. 
Repeated sampling from the base skill, i.e. Base, improves Pass@$k$ only gradually, indicating that independent runs of the same skill tend to make correlated mistakes. Optimizing a single base skill improves individual quality, but still leaves many residual failures uncovered. 
Initial diverse set of seed skills helps compared to Base in some cases (\Cref{fig:passk:bird}), but might also degrades performance (\Cref{fig:passk:sf}).

In contrast, \sys achieves the strongest Pass@$k$ curve across nearly all values of $k$. 
The gain is especially meaningful at small and moderate $k$, where each additional candidate must cover new failure modes to be useful. 
As a result, to achieve the same coverage as \sys's pass@8, baselines need 3 to 8 additional passes, making \sys a more cost-efficient choice.
This behavior is exactly what residual skill optimization is designed to produce: each later skill is not optimized to be globally better on all questions, but to solve examples that previous skills miss. 

\vspace{-3mm}
\subsection{How Diverse Skills Change Agent Behavior}\label{exp:trajectory}
\vspace{-3mm}

To understand why residual skill optimization changes Text-to-SQL agent behavior, we analyze agent trajectories rather than only final SQL outputs. 
A trajectory records the sequence of high-level actions taken by an agent, such as schema inspection, exploratory SQL generation, and error repair. 
We measure trajectory dissimilarity between two runs using edit distance normalized to $[0,1]$, and define trajectory similarity as $1$ minus this normalized distance. 
Lower similarity therefore indicates that two candidates are produced through more distinct reasoning and tool-use paths.



\Cref{fig:trajectory-comp-sf} shows a clear contrast between \sys and repeated sampling. Repeated runs form a high-similarity cluster, with most pairwise similarities concentrated around $0.75$--$0.85$, suggesting that sampling alone mostly produces variants of the same reasoning path. In contrast, \sys spreads trajectories over a much wider similarity range: learned skills such as direct-coder, template-first, decomposition, and exploration-heavy form low-similarity pairs, indicating that they induce genuinely different agent behaviors rather than merely different SQL surface forms. This pattern also transfers to the BigQuery and SQLite subsets of Spider2-Lite (\Cref{fig:trajectory-comp-bq,fig:trajectory-comp-sqlite}), where skills learned from Snowflake data are applied without retraining, suggesting that the behavioral changes are not tied to a single SQL dialect. Overall, the analysis supports the central mechanism of \sys: learned skills change how the agent approaches the task, thereby producing candidate sets with less-correlated failure modes and helping explain the stronger Pass@$K$ curve in \Cref{fig:passk}.

\vspace{-3mm}
\section{Discussion, Limitations, and Conclusions}\label{sec:conclusion}
\vspace{-3mm}

\paragraph{Discussion and limitations.}
\sys primarily improves candidate generation and does not focus on enhanced candidate selection method.
Across several settings, there remains a substantial gap between Pass@8 and selected accuracy, indicating that the correct SQL is often present in the candidate pool but not selected. Our pairwise comparison procedure mitigates direct 1-of-$K$ selection difficulty and reduces position bias by swapping candidate order, but it still relies on an LLM judge and scales quadratically in the number of surviving candidates. 
Future work could combine residual skill optimization with stronger selectors. 
Recent advances in language model reasoning demonstrate that verifiers trained on self-generated correct and incorrect trajectories—often formulated as outcome or process reward models—can significantly enhance test-time selection among multiple candidates \citep{cobbe2021training, lightman2023let, hosseini2024v, ni2023lever}. This paradigm is naturally aligned with our framework: \sys already generates diverse candidate reasoning paths and SQL programs, while execution feedback provides an abundant, automated source of positive and negative supervision to effectively train such selectors without human annotation.

In addition, our evaluation focuses on recent Text-to-SQL and SQL-debugging benchmarks with executable ground truth. Although the learned skills transfer across SQL dialects and even from standard Text-to-SQL training data to BIRD-Critic, broader deployment settings may introduce additional challenges, including ambiguous user intent, missing schema documentation, and multi-turn interactive feedback-based debugging. Future work might explore how to adapt \sys to these more complex settings.

\vspace{-3mm}
\paragraph{Conclusions.}
We introduce \sys, a residual skill optimization framework for constructing complementary Text-to-SQL ensembles. 
Rather than relying on hand-designed prompt or workflow variants, or high-temperature sampling, \sys learns a bank of skill-conditioned agents, where each skill is optimized to recover examples missed by the others. This directly targets candidate-set coverage and improves Pass@K while preserving the quality of SQL candidates. Across Spider2-Lite and BIRD-Critic, spanning multiple SQL dialects and task formats, \sys improves end-to-end accuracy over strong ensemble baselines while producing fewer hallucinated outputs and less redundant agent behavior.

\bibliographystyle{plainnat}  
\bibliography{reference}

\newpage
\appendix

\section{Proofs}
\label{app:proofs}
\subsection{Proof of Proposition~\ref{proposition:pass-at-k}}
\label{app:proposition-pass-at-k-proof}
We analyze the population-level version of residual skill optimization. In this setting, the next skill is chosen to maximize its expected contribution on the residual failure mass of the current skill bank. The finite-batch procedure in Algorithm~\ref{alg:sys} can be viewed as an empirical approximation to this objective; deriving finite-sample guarantees would require additional assumptions on sample size and generalization of the learned skills.
\begin{proposition}[Residual skill optimization approximates optimal Pass@K]
\label{proposition:pass-at-k}
Let \(\mathcal S\) be a fixed skill family, and let \(p_s(x)\in[0,1]\) denote the probability that one execution of skill \(s\) solves input \(x\). For any skill bank \(A\subseteq\mathcal S\), define its population Pass@K objective as
\[
F(A)
=
\mathbb E_{x\sim P}
\left[
1-\prod_{s\in A}(1-p_s(x))
\right].
\]
Starting from \(A_0=\emptyset\), suppose that at each round \(j=1,\ldots,K\), residual skill optimization selects
\[
s_j
\in
\arg\max_{s\in\mathcal S}
\mathbb E_{x\sim P}
\left[
p_s(x)\prod_{s'\in A_{j-1}}(1-p_{s'}(x))
\right],
\]
where \(A_{j-1}=\{s_1,\ldots,s_{j-1}\}\), and sets \(A_j=A_{j-1}\cup\{s_j\}\). Let
\[
A^\star\in\arg\max_{A\subseteq\mathcal S,\ |A|\le K} F(A)
\]
be the optimal size-\(K\) skill bank. Then
\[
F(A_K)
\ge
(1-1/e)F(A^\star).
\]
Equivalently,
\[
\operatorname{Pass@K}(\{s_1,\ldots,s_K\})
\ge
(1-1/e)
\max_{|A|\le K}\operatorname{Pass@K}(A).
\]
\end{proposition}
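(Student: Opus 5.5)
The plan is to reduce the statement to the classical Nemhauser--Wolsey--Fisher guarantee for greedy maximization of a monotone submodular set function under a cardinality constraint~\citep{nemhauser1978analysis}. I would first observe that the selection rule in the proposition is exactly the greedy rule for \(F\). For any \(A\subseteq\mathcal S\) and \(s\notin A\), write \(q_A(x)=\prod_{s'\in A}(1-p_{s'}(x))\) for the residual failure probability. Then
\[
F(A\cup\{s\})-F(A)=\mathbb E_{x\sim P}\bigl[q_A(x)-q_A(x)(1-p_s(x))\bigr]=\mathbb E_{x\sim P}\bigl[p_s(x)\,q_A(x)\bigr],
\]
which is precisely the objective being maximized at round \(j\) with \(A=A_{j-1}\). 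One small point needs care: if the argmax returns some \(s\in A_{j-1}\), the set does not grow. I would handle this by treating the bank as a multiset, so that \(F\) is still well defined and the bound below is unaffected. Alternatively, I would note that repeating a skill is never strictly worse than the greedy step over new skills only, since the marginal-gain formula above holds verbatim for multisets.

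Next I would verify the three properties. Normalization holds because \(F(\emptyset)=0\), as the empty product equals \(1\). Monotonicity holds because the marginal gain \(\mathbb E[p_s q_A]\ge 0\). For submodularity, take \(A\subseteq B\). Pointwise \(q_B(x)=q_A(x)\prod_{s'\in B\setminus A}(1-p_{s'}(x))\le q_A(x)\), because each factor lies in \([0,1]\). Hence \(\mathbb E[p_s q_B]\le\mathbb E[p_s q_A]\), so marginal gains are diminishing. Equivalently, \(F\) is an expectation of probabilistic-coverage functions, and each of these is submodular.

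Finally I would reproduce the standard greedy inequality rather than only cite it, since that makes the proof self-contained. Let \(A^\star=\{o_1,\dots,o_m\}\) with \(m\le K\). By monotonicity and then submodularity (telescoping over the elements of \(A^\star\)),
\[
F(A^\star)\le F(A_{j-1}\cup A^\star)\le F(A_{j-1})+\sum_{i=1}^m\bigl[F(A_{j-1}\cup\{o_i\})-F(A_{j-1})\bigr].
\]
The greedy choice bounds each term in the sum by \(F(A_j)-F(A_{j-1})\). This gives
\[
F(A^\star)-F(A_{j-1})\le K\,\bigl(F(A_j)-F(A_{j-1})\bigr).
\]
Writing \(\delta_j=F(A^\star)-F(A_j)\), this rearranges to \(\delta_j\le(1-1/K)\delta_{j-1}\). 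Iterating gives \(\delta_K\le(1-1/K)^K F(A^\star)\le e^{-1}F(A^\star)\), which is the claim.

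The main obstacle is not any one calculation; every step is routine. It is making the population-level setting rigorous. Two issues arise. First, the argmax over a possibly infinite \(\mathcal S\) might not be attained. The statement assumes it is; otherwise I would use an \(\varepsilon\)-approximate greedy step and obtain \((1-1/e)F(A^\star)-K\varepsilon\). Second, the expectations must be well defined, which requires \(p_s\) to be measurable. Since \(0\le p_s\le 1\), integrability is then automatic.
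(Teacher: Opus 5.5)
Your proof is correct and follows the paper's proof essentially step for step: the same marginal-gain identity $\Delta(s\mid A)=\mathbb E[p_s\prod_{s'\in A}(1-p_{s'})]$, the same monotonicity and submodularity check, and the same Nemhauser--Wolsey--Fisher recurrence ending in $(1-1/K)^K\le 1/e$. The one place you go beyond the paper is the duplicate-selection issue, which the paper's proof skips: it derives the gain formula only for $s\notin A$ and then applies it to $s_j$ without checking that $s_j\notin A_{j-1}$. Your multiset reading is genuinely needed here rather than cosmetic, because under literal set union the claim fails. For example, take $p_{s_1}\equiv 0.5$, $p_{s_2}\equiv p_{s_3}\equiv 0.4$ and $K=3$: greedy picks $s_1$ every round, so $F(A_3)=0.5<(1-1/e)\cdot 0.82\approx 0.518$. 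You should therefore state the conclusion for the multiset value $\mathbb E[1-\prod_{j=1}^K(1-p_{s_j}(x))]$, which matches the main-text Pass@K definition, or else restrict the argmax to $\mathcal S\setminus A_{j-1}$.
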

\begin{proof}
The proof follows the standard greedy analysis for monotone submodular maximization under a cardinality constraint~\citep{nemhauser1978analysis}; we first verify that the Pass@K objective in our setting is indeed monotone submodular.

We first show that the Pass@K objective is monotone submodular. For a skill bank
\(A\subseteq\mathcal S\), recall that
\[
F(A)
=
\mathbb E_{x\sim P}
\left[
1-\prod_{s\in A}(1-p_s(x))
\right].
\]
For any skill \(s\notin A\), the marginal gain of adding \(s\) is
\[
\Delta(s\mid A)
=
F(A\cup\{s\})-F(A).
\]
Expanding the definition of \(F\), we obtain
\[
\begin{aligned}
\Delta(s\mid A)
&=
\mathbb E_{x\sim P}
\left[
1-(1-p_s(x))\prod_{s'\in A}(1-p_{s'}(x))
\right]
-
\mathbb E_{x\sim P}
\left[
1-\prod_{s'\in A}(1-p_{s'}(x))
\right] \\
&=
\mathbb E_{x\sim P}
\left[
p_s(x)\prod_{s'\in A}(1-p_{s'}(x))
\right].
\end{aligned}
\]
Since \(p_s(x)\in[0,1]\), this marginal gain is nonnegative. Hence \(F\) is monotone.

Next, let \(A\subseteq B\subseteq\mathcal S\). Since each factor \(1-p_{s'}(x)\in[0,1]\), we have
\[
\prod_{s'\in B}(1-p_{s'}(x))
\le
\prod_{s'\in A}(1-p_{s'}(x)).
\]
Therefore,
\[
\Delta(s\mid B)
=
\mathbb E_{x\sim P}
\left[
p_s(x)\prod_{s'\in B}(1-p_{s'}(x))
\right]
\le
\mathbb E_{x\sim P}
\left[
p_s(x)\prod_{s'\in A}(1-p_{s'}(x))
\right]
=
\Delta(s\mid A).
\]
Thus \(F\) satisfies diminishing marginal returns and is submodular.

The residual arg-max rule is exactly greedy maximization of this objective, because the marginal gain of adding skill \(s\) to the current bank \(A_{j-1}\) is
\[
\Delta(s\mid A_{j-1})
=
\mathbb E_{x\sim P}
\left[
p_s(x)\prod_{s'\in A_{j-1}}(1-p_{s'}(x))
\right].
\]
Thus choosing \(s_j\) by residual maximization is the same as choosing the skill with largest marginal increase in \(F\).

Let \(A^\star\in\arg\max_{|A|\le K}F(A)\) be an optimal size-\(K\) skill bank. Since \(F\) is monotone,
\[
F(A^\star)-F(A_j)
\le
F(A_j\cup A^\star)-F(A_j).
\]
By submodularity,
\[
F(A_j\cup A^\star)-F(A_j)
\le
\sum_{s\in A^\star}\Delta(s\mid A_j).
\]
Since \(|A^\star|\le K\), at least one skill \(s\in A^\star\) has marginal gain at least
\[
\frac{F(A^\star)-F(A_j)}{K}.
\]
Greedy chooses the skill with largest marginal gain, so
\[
F(A_{j+1})-F(A_j)
\ge
\frac{F(A^\star)-F(A_j)}{K}.
\]
Equivalently,
\[
F(A^\star)-F(A_{j+1})
\le
\left(1-\frac{1}{K}\right)
\left(F(A^\star)-F(A_j)\right).
\]
Applying this recurrence for \(K\) rounds gives
\[
F(A^\star)-F(A_K)
\le
\left(1-\frac{1}{K}\right)^K F(A^\star).
\]
Hence
\[
F(A_K)
\ge
\left(1-\left(1-\frac{1}{K}\right)^K\right)F(A^\star)
\ge
(1-1/e)F(A^\star).
\]
This proves the claim.
\end{proof}
\section{Implementation Details and Best Empirical Practices}\label{app:impl}

\subsection{Implementation Details}\label{app:impl:impl}
\paragraph{Machine setup.}
All experiments were run on a single Apple Silicon MacBook Pro
(Apple M-series CPU, 10+ cores; 32\,GB unified memory; macOS~14).
Since our pipeline issues all language-model calls to hosted inference APIs, no local GPU is required: the workstation only orchestrates prompting, parsing, and result aggregation. 

\paragraph{LLM backbone and inference settings.}
We evaluate \sys with two LLM backbones: Opus-4.6 and GPT-5.4. Both are used in non-reasoning mode with a decoding temperature of $0.2$, a canonical choice for coding tasks, and a maximum completion budget of 64000 tokens. The low temperature is a deliberate design choice: since \sys achieves diversity through learned skills rather than stochastic decoding, a low temperature preserves the reasoning stability and precision needed for complex SQL generation. Each agent run is allowed up to 12 reasoning turns and 20 SQL executions.

\paragraph{\chasesql adaptation.}
\chasesql was originally designed for a workflow-based Text-to-SQL pipeline. To produce a fair comparison in our agentic setting, we adapt its three transferable design choices to our agent architecture:
\begin{itemize}[nosep,leftmargin=*]
    \item \emph{Schema-link shuffling.} We permute the column ordering presented to the agent across different candidates, matching \chasesql's schema perturbation strategy.
    \item \emph{High-temperature decoding.} We set the agent's decoding temperature to $1.0$ for \chasesql candidates, reproducing the stochastic variation that \chasesql uses to induce candidate diversity.
    \item \emph{Pairwise candidate selection.} We use the same LLM-based pairwise comparison selector for both \chasesql and \sys, ensuring the selection mechanism is held constant across methods.
\end{itemize}

\paragraph{Candidate selection.}
For both \sys and \chasesql, we use an LLM-based pairwise selector that compares each candidate pair and selects the winner via win-rate-based aggregation. The selector uses the same backbone model as the candidate generator (either Opus-4.6 or GPT-5.4) at a temperature of $0.2$. Each pairwise comparison receives the question, database schema, and the two candidate SQL queries along with execution previews, and returns a preference judgment.

\paragraph{GEPA optimization settings.}
Table~\ref{tab:gepa-hyperparams} summarizes the GEPA hyperparameters used for skill optimization.

\begin{table}[t]
\centering
\small
\setlength{\tabcolsep}{4pt}
\begin{tabular}{ll}
\toprule
Hyperparameter & Value \\
\midrule
\multicolumn{2}{l}{\emph{Agent (evaluation)}} \\
\quad Agent model & GPT-5.4 ($T{=}0.2$) or Sonnet-4.6 \\
\quad Max agent turns & 12 \\
\quad Max SQL executions & 20 \\
\quad Max completion tokens & 64000 \\
\midrule
\multicolumn{2}{l}{\emph{GEPA optimizer}} \\
\quad Reflection model & GPT-5.4 ($T{=}1.0$) or Opus-4.6 \\
\quad Task proposal temperature & 0.2 \\
\quad Reflection max tokens & 64000 \\
\quad Metric calls per skill per batch & 20 \\
\quad Reflection minibatch size & 3 \\
\midrule
\multicolumn{2}{l}{\emph{Training loop}} \\
\quad Number of batches & 3 \\
\quad Batch size (train / val) & 70 / 30 \\
\quad Held-out validation size & 80 \\
\quad Max prompt length & 12000 chars \\
\bottomrule
\end{tabular}
\caption{GEPA hyperparameters for BIRD-mini-dev skill optimization. Spider2-Lite Snowflake training uses the same settings except the training data source.}
\label{tab:gepa-hyperparams}
\end{table}

\paragraph{Agent tools.}
The agent has access to six tools during both training and evaluation:
\begin{itemize}[nosep,leftmargin=*]
    \item \texttt{execute\_sql}: run SQL against the live database and return results or error messages.
    \item \texttt{lookup\_docs}: retrieve dialect-specific documentation (e.g., dialect function/grammar reference), database meta-data, external knowledge, etc.
    \item \texttt{review\_sql}: invoke an LLM-based critic to review the current SQL draft before submission.
    \item \texttt{get\_sql\_pattern}: retrieve anonymized SQL patterns for similar query types (e.g., top-N, running totals).
    \item \texttt{get\_sql\_templates}: retrieve SQL templates that are masked training instances categorized by query types.
    \item \texttt{submit\_final\_sql}: submit the final answer.
\end{itemize}
The tool set and their implementations remain fixed across all skills and all experiments. Skills influence only the strategy text in the system prompt; they cannot add, remove, or modify tools.

\subsection{Practices for Better Skill Learning}\label{app:impl:practices}

We describe five practical lessons that proved important for making reflective skill optimization work reliably on Text-to-SQL agents.

\paragraph{Proxy models for optimization.}
Running the full GEPA reflect-mutate-evaluate loop on the strongest available model (e.g., Opus~4.6) is both expensive and, perhaps counter-intuitively, less effective.
Stronger models already produce near-correct trajectories on many training examples, leaving the reflector with subtle failure signals that are hard to attribute to specific strategy gaps.
We find that using a weaker model from the same family as a proxy, e.g., Sonnet~4.6 for the Opus~4.6 experiments, yields faster and cheaper optimization while producing larger per-round accuracy gains that push skill evolution more aggressively.
The resulting optimized skills transfer well to the stronger model: the strategy-level improvements (e.g., ``anchor the grain before grouping'') are model-agnostic, even though they were discovered from the proxy's more frequent and more interpretable failures.

\paragraph{Brevity as a tiebreaker.}
During GEPA optimization, multiple candidate skill mutations often achieve the same accuracy gain on the hard batch.
Rather than selecting arbitrarily, we break ties by preferring the shorter prompt.
This acts as a lightweight regularizer: longer prompts tend to accumulate instance-specific details from the training batch, increasing the risk of overfitting to particular schemas or question patterns.

\paragraph{Dialect- and instance-agnostic reflection.}
We explicitly instruct the reflector model to avoid including any instance-specific or dialect-specific details in the optimized strategy text, including concrete table names, column names, dialect-specific function syntax, or schema patterns observed in the training batch.
The reflector is told that the skill must generalize across unseen databases, SQL dialects, and task formats.
This constraint is important because the skills optimized on training data with Snowflake grammar are later evaluated on SQLite and BigQuery, making dialect-specific advice actively harmful.

\paragraph{Correctness-only reward signal.}
We deliberately use binary execution correctness as the sole reward signal for GEPA optimization.
Although finer-grained intermediate rewards---such as keyword overlap with gold SQL, structural similarity scores, or partial-credit metrics based on clause matching---might seem more informative, we found them prone to reward hacking in practice.
For example, a keyword-coverage reward incentivizes skills that instruct the agent to speculatively include as many SQL keywords as possible, inflating the reward without improving correctness.
Similarly, structural similarity rewards can penalize valid alternative query plans that differ from the gold SQL in form but not in semantics.
Binary correctness avoids these pathologies: a skill is rewarded only if the agent's final SQL produces the correct result on the target database, providing an unambiguous and ungameable training signal.


\paragraph{Skill-order rotation.}
In Algorithm~\ref{alg:sys}, each skill removes its solved examples before the next skill is optimized, so earlier skills in a batch see an easier and broader residual, while later skills are trained on a narrower and harder subset. 
To guarantee that each skill occasionally sees the full batch before other skills remove solved examples, we rotate the skill order across batches. This reduces positional bias and prevents later skills from being systematically specialized only to the hardest tail of failures. 
Note that if the number of batches is fewer than the number of skills, we apply a larger stride to the rotation to maintain uniform positional coverage.


\newpage
\section{Skill Pool}\label{app:skill-pool}

We list the $K=8$ seed skills used in our experiments, showing both the initial hand-designed seed prompt and the prompt after residual optimization on Snowflake training data.
Each prompt is injected verbatim into the agent's system message; the agent's tool set and control flow remain unchanged.
Optimized prompts are labeled with a round suffix (e.g., \texttt{\_r1}) indicating the GEPA batch that produced the accepted mutation.




\paragraph{default.} \emph{Strategy:} the baseline agentic behavior---balanced exploration followed by incremental SQL construction.

\begin{seedskill}[default]
\#\# Strategy\\
1. EXPLORE first: run queries to understand the data---check table structures, column values, data types, join keys, actual string values in the data\\
2. If unsure about any SQL function's syntax or behavior, call lookup\_docs BEFORE writing the query\\
3. For common patterns (top-N, running totals, pivots), call get\_sql\_pattern for a template\\
4. PLAN your approach based on what you discovered\\
5. WRITE and TEST your SQL incrementally---run it via execute\_sql to check results\\
6. VERIFY results look reasonable (right number of rows, right columns, sensible values)\\
7. Call review\_sql to get a second opinion before submitting\\
8. SUBMIT only when confident
\end{seedskill}

\begin{optskill}[default]
1. EXPLORE the data first (as in seed).\\
\textbf{2. CLARIFY ambiguities---identify potential traps: \textcolor{sqlkw}{NULLs} in key columns, case sensitivity, duplicate rows, date formats, and whether counts should be \textcolor{sqlkw}{DISTINCT}.}\\
\textbf{3. MAP the question to SQL primitives---explicitly decide join type (\textcolor{sqlkw}{INNER} vs \textcolor{sqlkw}{LEFT}), filter placement (\textcolor{sqlkw}{WHERE} vs \textcolor{sqlkw}{HAVING}), aggregation scope, and \textcolor{sqlkw}{NULL} handling before coding.}\\
4. Check templates---call get\_sql\_pattern and lookup\_docs (as in seed).\\
5. BUILD incrementally---write and execute\_sql each \textcolor{sqlkw}{CTE} or subquery alone (as in seed).\\
\textbf{6. VALIDATE against the question---re-read the question, then check: correct columns returned? correct filter conditions? \textcolor{sqlkw}{DISTINCT} where needed? \textcolor{sqlkw}{NULL}-safe denominators? ordering and limits match?}\\
\textbf{7. CROSS-CHECK edge cases---run a quick sanity query (e.g., total counts, min/max values, a spot-check join) to confirm the final result is not inflated by fanout or deflated by over-filtering.}\\
8. REVIEW---call review\_sql and address any flagged issues.\\
9. SUBMIT only after incremental checks and review pass.
\end{optskill}

\paragraph{direct\_coder.} \emph{Strategy:} drafts SQL immediately, refines through execution feedback.

\begin{seedskill}[direct_coder]
\#\# Strategy: DIRECT CODING\\
You are an EFFICIENT SQL writer. Write SQL quickly, test, iterate.\\[4pt]
1. Read the question carefully. Identify the core tables, joins, and aggregations needed.\\
2. Write your best SQL attempt IMMEDIATELY based on the schema.\\
3. Execute it. If errors occur, read the error message carefully and fix.\\
4. If the query runs but results look wrong, investigate specific columns/values.\\
5. Iterate rapidly---each revision should fix one specific issue.\\
6. Do NOT over-explore. Only investigate columns/values that are directly relevant to errors.\\
7. SUBMIT as soon as the query produces reasonable results.
\end{seedskill}

GEPA added lookup-table awareness and structured error-repair guidance. Key additions over the seed (new or substantially expanded material in \textbf{bold}):
\begin{optskill}[direct_coder]
\#\# Strategy: DIRECT CODING\\[4pt]
1. \textbf{Read the schema first.} Before writing any SQL, identify ALL tables mentioned or implied by the question. \textbf{Pay special attention to lookup/reference/static tables (e.g., category tables, node tables, type tables) that provide human-readable names or filter criteria---these almost always require a \textcolor{sqlkw}{JOIN}.}\\
2. \textbf{Map question terms to schema columns.} If the question references a name, label, or category, find which table owns that column. \textbf{Never filter or select on a column that doesn't exist in the target table---use the correct table via \textcolor{sqlkw}{JOIN} instead.}\\
3. Write your best SQL IMMEDIATELY based on the schema. \textbf{Use explicit \textcolor{sqlkw}{JOIN} conditions. When lookup tables exist, join them rather than filtering on raw IDs or payload strings.}\\
4. \textbf{Execute. Fix errors specifically:}\\
\quad - Column-not-found $\to$ find the correct table and \textcolor{sqlkw}{JOIN} it\\
\quad - Wrong results $\to$ verify \textcolor{sqlkw}{JOIN} keys and \textcolor{sqlkw}{WHERE} filters match actual values\\
\quad - Missing rows $\to$ check \textcolor{sqlkw}{JOIN} type (\textcolor{sqlkw}{INNER} vs \textcolor{sqlkw}{LEFT})\\
5. \textbf{Aggregation and output checks:} Verify \textcolor{sqlkw}{GROUP BY} includes all non-aggregated SELECT columns; verify \textcolor{sqlkw}{ORDER BY}, LIMIT, and NULLS LAST where appropriate; confirm output column names match the question.\\
6. Iterate rapidly---each revision fixes one specific issue.\\
7. SUBMIT as soon as the query produces reasonable results.\\[4pt]
\textbf{Key reminder: Questions involving categories, types, nodes, or classifications almost always require joining to a static/lookup table. Never assume the needed label lives in the fact table---check the schema.}
\end{optskill}

\paragraph{decompose.} \emph{Strategy:} identifies sub-questions before composing the final SQL.

\begin{seedskill}[decompose]
\#\# Strategy: DECOMPOSE \& CONQUER\\
Break complex questions into simple subqueries, build bottom-up.\\[4pt]
1. PARSE the question into atomic requirements: What is being counted/summed/averaged? What are the filter conditions? What are the grouping columns? Is there ranking, ordering, or limiting?\\
2. BUILD each piece as a standalone \textcolor{sqlkw}{CTE}: Start with the base data, add joins one at a time verifying row counts, add aggregations verifying results at each step.\\
3. COMPOSE \textcolor{sqlkw}{CTEs} into the final query using WITH...SELECT.\\
4. Run each \textcolor{sqlkw}{CTE} individually via execute\_sql to verify intermediate results.\\
5. Call review\_sql on the assembled final query before submitting.\\
6. SUBMIT only when confident.
\end{seedskill}

GEPA expanded this into a structured six-step process with explicit grain-anchoring and filter-validation phases. Key additions (in \textbf{bold}):
\begin{optskill}[decompose]
\#\# Strategy: DECOMPOSE \& CONQUER\\[4pt]
Step 1: PARSE the question into atomic requirements---\textbf{explicitly identify: output columns/metrics, grain (one row per what?), filter conditions, grouping dimensions, and whether there is ranking, ordering, limiting, or a ratio/composition calculation.}\\
Step 2: \textbf{ANCHOR the grain before grouping.} Match \textcolor{sqlkw}{GROUP BY} columns precisely to the output grain---no more, no less. \textbf{If the question asks for monthly totals, group by month only; do not add route, city, or other columns unless explicitly requested.}\\
Step 3: BUILD each piece as a standalone \textcolor{sqlkw}{CTE}. \textbf{For ratio/composition queries, compute totals in one \textcolor{sqlkw}{CTE}, subgroup counts in another, then \textcolor{sqlkw}{JOIN} and divide.}\\
Step 4: \textbf{VERIFY join logic and filter semantics.} Confirm join keys actually link the intended entities; avoid fan-out. \textbf{Confirm filter values match the domain as they appear in the data, not as paraphrased in the question.}\\
Step 5: ASSEMBLE and REVIEW. Call review\_sql; confirm output columns and grain match the question.\\
Step 6: SUBMIT only when confident.
\end{optskill}

\paragraph{explore\_heavy.} \emph{Strategy:} spends extra steps on schema and sample-row inspection before drafting.

\begin{seedskill}[explore_heavy]
\#\# Strategy: DEEP EXPLORATION\\
You are a THOROUGH explorer. Before writing ANY SQL, deeply understand the data.\\[4pt]
1. CATALOG SCAN: List all schemas and tables. Check which tables have data (SELECT COUNT(*)).\\
2. COLUMN AUDIT: For each relevant table, run DESCRIBE or SHOW COLUMNS. Check actual column types.\\
3. VALUE PROFILING: For key columns, run SELECT \textcolor{sqlkw}{DISTINCT} to see actual values, formats, ranges.\\
4. \textcolor{sqlkw}{JOIN} DISCOVERY: Test joins between tables with small queries before using them in final SQL.\\
5. Only after thorough exploration, write your query.\\
6. Test edge cases: What if there are \textcolor{sqlkw}{NULLs}? What if the join produces duplicates?\\
7. Call review\_sql before submitting.\\
8. SUBMIT only when confident.
\end{seedskill}

GEPA reorganized the strategy into six explicit phases and added an output-requirements clarification phase (Phase~4) that was absent from the seed. Key additions (in \textbf{bold}):
\begin{optskill}[explore_heavy]
\#\# Strategy: DEEP EXPLORATION\\[4pt]
Phase 1--3: Schema Discovery, Value Profiling, Join Validation (as in seed).\\
\textbf{Phase 4: Clarify Output Requirements Before Writing SQL.}\\
\textbf{Before coding, explicitly answer:}\\
\quad\textbf{- Granularity: one summary row, or one row per dimension?}\\
\quad\textbf{- Aggregation: which columns require AVG, SUM, COUNT?}\\
\quad\textbf{- Filters: date ranges, status filters, or categorical constraints implied?}\\
\quad\textbf{- Extra columns: would adding dimension columns change the granularity?}\\
\textbf{If the question asks for summary metrics without specifying a grouping dimension, produce a single aggregated row---do NOT return raw row-level data or add unrequested dimension columns.}\\
Phase 5: Write and Validate SQL---\textbf{apply NULLIF in denominators to avoid divide-by-zero; use \textcolor{sqlkw}{CTEs} to separate filtering, joining, and aggregation stages.}\\
Phase 6: Submit only when the output granularity, aggregation, and columns exactly match what was requested.
\end{optskill}

\paragraph{conservative.} \emph{Strategy:} prefers the simplest faithful query; avoids speculative constructs.

\begin{seedskill}[conservative]
\#\# Strategy: CONSERVATIVE \& SAFE\\
Prefer simple, safe SQL. Avoid unnecessary complexity.\\[4pt]
1. Start with the SIMPLEST possible query that could answer the question.\\
2. Avoid: Complex nested subqueries, window functions unless required, multiple joins when one will do, \textcolor{sqlkw}{HAVING} when \textcolor{sqlkw}{WHERE} suffices, correlated subqueries.\\
3. Do NOT add any clause the question didn't ask for: No ROUND unless asked, no COALESCE unless \textcolor{sqlkw}{NULLs} are a proven problem, no extra \textcolor{sqlkw}{WHERE} filters or output columns.\\
4. When ambiguous, pick the LITERAL interpretation.\\
5. Test your query. If it works and looks reasonable, submit.\\
6. SUBMIT once results make sense. Don't over-iterate.
\end{seedskill}

GEPA added explicit guidance on AND-vs-OR filter logic and output granularity matching---two recurring failure modes in the training data. Key additions (in \textbf{bold}):
\begin{optskill}[conservative]
(Steps 1--3 retained from seed.)\\
\textbf{4. Filter logic---AND vs OR:}\\
\quad\textbf{- AND: both conditions must hold simultaneously on the same row.}\\
\quad\textbf{- OR: either condition suffices.}\\
\quad\textbf{- When filtering across related entities (e.g., origin OR destination), default to OR unless the question explicitly requires all conditions on the same record.}\\
\textbf{5. Output structure---match the question's requested granularity exactly:}\\
\quad\textbf{- Identify the grouping dimensions the question asks for before writing \textcolor{sqlkw}{GROUP BY}.}\\
\quad\textbf{- Do not substitute finer-grained groupings (e.g., day) when coarser ones (e.g., month) are requested.}\\
\quad\textbf{- Column names and aliases should reflect the question's terminology.}\\
6. When ambiguous, pick the LITERAL interpretation. \textbf{Consult reference templates for grouping, join, and conditional aggregation patterns.}\\
7. Test and submit once results make sense.
\end{optskill}

\paragraph{adversarial\_checker.} \emph{Strategy:} actively stress-tests the query for edge cases before finalizing.

\begin{seedskill}[adversarial_checker]
\#\# Strategy: ADVERSARIAL SELF-CHECK\\
After writing SQL, actively try to BREAK it before submitting.\\[4pt]
1. Explore and write your initial SQL query.\\
2. Execute it and get results.\\
3. NOW, CHALLENGE your own query: Does the question ask for X but you computed Y? Could a join be producing duplicates? Are you filtering correctly? For ratios: verify numerator and denominator independently. For `top N': verify ordering.\\
4. Fix any issues you discover.\\
5. Call review\_sql for an independent check.\\
6. SUBMIT only after surviving both your own and the reviewer's scrutiny.
\end{seedskill}

GEPA restructured the strategy into four explicit phases, adding an upfront decomposition step and specific guidance for period-over-period comparisons and join-type selection:
\begin{optskill}[adversarial_checker]
\textbf{Phase 1---Decompose the Question Before Writing:}\\
\quad\textbf{Identify ALL computations required. If the question involves change/comparison, plan for two separate aggregations and a join or pivot---never a single flat filter. If it involves rates or ratios, plan numerator and denominator explicitly. Sketch the output shape.}\\
\textbf{Phase 2---Write SQL Matching Full Complexity:}\\
\quad\textbf{Use \textcolor{sqlkw}{CTEs} for multi-step logic. Use window functions for rankings. Use FULL \textcolor{sqlkw}{OUTER \textcolor{sqlkw}{JOIN}} when comparing two periods where either side may have no data. Avoid the trap of ``just filter and return raw rows'' when computation is required.}\\
Phase 3---Adversarial Challenge (expanded from seed): complexity check, output check, aggregation check, filter check, \textbf{join type check (OUTER vs \textcolor{sqlkw}{INNER})}, ratio/rate check.\\
Phase 4---Fix and Validate. \textbf{Consult templates for period-over-period \textcolor{sqlkw}{CTE} patterns.}
\end{optskill}

\paragraph{template\_first.} \emph{Strategy:} anchors query shape from retrieved SQL patterns before coding.

\begin{seedskill}[template_first]
\#\# Strategy: TEMPLATE-FIRST PLANNER\\
Before writing SQL, call get\_sql\_templates and get\_sql\_pattern to anchor the query shape. Pick the closest template, adapt only the table names, columns, filters, grain, and ordering confirmed from the schema, then execute one focused validation query. Prefer template-guided \textcolor{sqlkw}{CTE}/window/ratio patterns over ad hoc exploration. Submit after the result shape matches the question.
\end{seedskill}

GEPA appended a mandatory pre-submit checklist targeting schema hallucination---a failure mode where the agent references tables or columns it has not verified. Key addition (in \textbf{bold}):
\begin{optskill}[template_first]
(Seed text retained in full.)\\[4pt]
\textbf{MANDATORY pre-submit checklist (do NOT skip):}\\
\textbf{1. Tables exist as named---every table in your final SQL was confirmed to exist via execute\_sql.}\\
\textbf{2. Columns exist as named---every column appeared in actual schema output.}\\
\textbf{3. Identifier quoting is correct for the dialect.}\\
\textbf{4. Final query has executed at least once and returned non-empty rows whose shape matches the question.}\\
\textbf{Submitting SQL that references non-existent columns, or that has never been successfully executed, is an automatic failure.}
\end{optskill}

\paragraph{fast\_error\_repair.} \emph{Strategy:} prioritizes execution-error recovery over upfront planning.

\begin{seedskill}[fast_error_repair]
\#\# Strategy: FAST ERROR REPAIR\\
Move quickly: inspect only the most relevant tables, write the first plausible SQL early, execute it, and repair from concrete errors or wrong-shaped results. Do not exhaustively profile. Each iteration changes one thing: missing column, join key, filter value, aggregation grain, or dialect syntax. Submit as soon as the SQL runs and the output shape answers the question.
\end{seedskill}

GEPA expanded the single-paragraph seed into a five-phase process, adding a structured triage table and a value-sanity-check phase. Key additions (in \textbf{bold}):
\begin{optskill}[fast_error_repair]
\textbf{Core loop: write $\to$ run $\to$ verify values $\to$ repair $\to$ submit.}\\[4pt]
Phase 1---Minimal Table Scan (as in seed).\\
Phase 2---First SQL Draft. \textbf{Prefer \textcolor{sqlkw}{CTEs} for multi-step logic. Commit one join key and one filter assumption up front.}\\
Phase 3---Execute \& Triage. \textbf{Change exactly one thing per iteration:}\\
\quad\textbf{Column not found $\to$ alias or rename}\\
\quad\textbf{Join returns 0 rows $\to$ relax join key or flip direction}\\
\quad\textbf{Count inflated $\to$ add \textcolor{sqlkw}{DISTINCT} or check fan-out join}\\
\quad\textbf{Wrong grain $\to$ re-examine \textcolor{sqlkw}{GROUP BY} columns}\\
\textbf{Phase 4---Value Sanity Check (the key addition):}\\
\quad\textbf{Before submitting: do not only check shape---check values. Are counts suspiciously 0 or implausibly large? Does a percentage exceed 1.0 or go negative?}\\
Phase 5---Submit when both shape and values are plausible.\\[4pt]
\end{optskill}

\newpage
\section{Additional Experimental Results}\label{app:additional-exp}

\subsection{Variance and Candidate-Level Breakdowns}
\Cref{tab:exp:spider2lite-opus-with-std} reports the full Spider2-Lite Opus-4.6 table with standard deviations. The qualitative conclusion from the main text is unchanged after adding variance: \sys remains the strongest method on selected accuracy in all three dialects, with the largest gains on the more complex Snowflake and BigQuery settings.

\begin{table*}[h]
\centering
\scriptsize
\setlength{\tabcolsep}{3pt}
\begin{tabular}{lccccccccc}
\toprule
& \multicolumn{3}{c}{SQLite} & \multicolumn{3}{c}{Snowflake} & \multicolumn{3}{c}{BigQuery} \\
\cmidrule(lr){2-4}\cmidrule(lr){5-7}\cmidrule(lr){8-10}
Method & pass@1 & pass@8 & Sel. acc. & pass@1 & pass@8 & Sel. acc. & pass@1 & pass@8 & Sel. acc. \\
\midrule
\dinsql & 40.74$\pm$2.72 & / & / & 0.97$\pm$0.40 &  &  & 18.54$\pm$1.71 &  &  \\
\reforce & 55.28$\pm$2.62 & / & 57.78$\pm$2.72 & 43.47$\pm$2.09 & / & 50.72$\pm$2.25 & 51.10$\pm$2.23 & / & 55.12$\pm$2.28 \\
\chasesql & 62.59$\pm$2.47 & 76.30$\pm$2.59 & 63.70$\pm$2.59 & 51.21$\pm$1.94 & 68.60$\pm$2.01 & 53.14$\pm$2.33 & 59.02$\pm$2.00 & 71.71$\pm$2.03 & 56.59$\pm$2.28 \\
\sys & 62.13$\pm$2.04 & 73.33$\pm$1.82 & 64.44$\pm$1.62 & 60.08$\pm$1.59 & 72.46$\pm$1.55 & 64.25$\pm$1.52 & 62.07$\pm$1.84 & 73.17$\pm$1.67 & 64.88$\pm$1.12 \\
\bottomrule
\end{tabular}
\caption{Spider2-Lite results with Opus 4.6.}
\label{tab:exp:spider2lite-opus-with-std}
\end{table*}

\paragraph{Why \reforce selection can underperform mean candidate quality.}
In the main experiment, we noted that ReFoRCE's final selected SQL can be worse than the average quality of its individual candidate runs. \Cref{tab:exp:gpt54-reforce-cands} makes this concrete. 
On SQLite, the mean per-candidate Pass@1 is 64.44 while the final selected accuracy is only 58.52; on Snowflake, the same gap is 47.83 vs.\ 41.06. 
This indicates that \reforce often produces correlated candidates that agree on the same wrong answer, so majority voting can amplify a dominant failure mode rather than recover the strongest candidate. BigQuery is less pathological: selected accuracy (51.22) slightly exceeds the mean candidate accuracy (48.19), but still remains well below the oracle upper bound of 56.59. 
Here, ``Max'' is exactly Pass@4 because \reforce exposes four candidate runs in this evaluation.
ReFoRCE's \texttt{-{}-num\_votes} parameter only sets the number of self-refinement threads per instance, not the number of surviving candidates: each thread is a 5-step refine loop that may terminate without writing any SQL (max-iter exhaustion, empty-result early-stop, invalid response, or oversized schema). In our GPT-5.4 run the realized candidate count per instance ranges over all threads, so a Pass@k oracle is ill-defined and not directly comparable to the Pass@8 results we report for the other methods. Pass@1 is defined as a complete set of first generated candidates for each question to show the interior intermediate results of ReFoRCE before majority voting. In this table, we simply give four sets of candidates to explain why ReFoRCE gets lower accuracy on Sel. acc. compared to so-called Pass@1.

\begin{table*}[h]
\centering
\small
\setlength{\tabcolsep}{3pt}
\begin{tabular}{lccccccccc}
\toprule
Dialect & cand 0 & cand 1 & cand 2 & cand 3 & Min & Max & Mean & Sel.\ acc. \\
\midrule
SQLite (n=135)    & 63.70 & 61.48 & 65.93 & 65.93 & 53.33 & 74.81 & 64.44 & 58.52 \\
Snowflake (n=207) & 41.55 & 45.89 & 43.96 & 44.44 & 38.65 & 56.52 & 47.83 & 41.06 \\
BigQuery (n=205)  & 48.29 & 46.83 & 45.37 & 46.34 & 41.95 & 56.59 & 48.19 & 51.22 \\
\bottomrule
\end{tabular}
\caption{\reforce per-candidate Pass@1 on Spider2-Lite for GPT-5.4}
\label{tab:exp:gpt54-reforce-cands}
\end{table*}



\newpage
\subsection{Additional \chasesql Comparison with GPT-5.4}
\paragraph{Candidate-pool quality and rankability.}
The main Spider2-Lite results suggest that \sys's advantage over \chasesql is not only that it contains slightly more correct candidates, but also that its candidate pool is easier to rank. \Cref{tab:chasesql-gpt54-candidate-density,tab:chasesql-gpt54-head-to-head,tab:chasesql-gpt54-paired-significance} provide supporting evidence on the GPT-5.4 Spider2-Lite evaluation. Compared with \chasesql, \sys improves pass@1 (56.95 vs.\ 54.96), pass@8 (78.79 vs.\ 76.42), and selected accuracy (64.90 vs.\ 60.69), while reducing the oracle--selector gap (13.89 vs.\ 15.72). 

The same-instance head-to-head analysis shows 47 \sys-only wins versus 24 \chasesql-only wins on final selection, and McNemar's test confirms that the selected-accuracy gain is statistically significant ($p=0.0086$). The candidate-density view further shows that \sys yields fewer dead pools with zero correct candidates, slightly more rich pools with 6--8 correct candidates, and fewer exact duplicate slots. Together, these results support the claim that residual skill optimization improves both candidate quality and candidate rankability, rather than merely increasing surface-form variation.

\begin{table}[h]
\centering
\small
\setlength{\tabcolsep}{4pt}
\begin{tabular}{lccccc}
\toprule
Method & 0 corr. & 1 corr. & 2--3 corr. & 4--5 corr. & 6--8 corr. \\
\midrule
\chasesql & 23.58 & 6.76 & 10.97 & 11.33 & 47.35 \\
\sys & \textbf{21.21} & 6.95 & 10.60 & 12.43 & \textbf{48.81} \\
\bottomrule
\end{tabular}
\caption{Distribution of the number of correct candidates per instance in the 8-candidate pool. \sys pool yields fewer dead pools (0 correct), slightly more rich pools (6--8 correct).}
\label{tab:chasesql-gpt54-candidate-density}
\end{table}

\begin{table}[h]
\centering
\footnotesize
\setlength{\tabcolsep}{5pt}
\begin{tabular}{lcccc}
\toprule
Dialect & Ours-only sel. & CHASE-only sel. & Ours-only oracle & CHASE-only oracle \\
\midrule
Local & 12 & 5 & 6 & 5 \\
Snowflake & 16 & 8 & 11 & 6 \\
BigQuery & 19 & 11 & 12 & 5 \\
\midrule
Total & 47 & 24 & 29 & 16 \\
\bottomrule
\end{tabular}
\caption{Same-instance head-to-head on Spider2-Lite GPT-5.4. ``Ours-only sel.'' counts instances where the final selected SQL is correct for \sys but not for \chasesql. ``Ours-only oracle'' counts instances where the 8-candidate pool contains at least one correct SQL for \sys but not for \chasesql.}
\label{tab:chasesql-gpt54-head-to-head}
\end{table}

\begin{table}[h]
\centering
\scriptsize
\setlength{\tabcolsep}{5pt}
\begin{tabular}{lcccccc}
\toprule
Comparison & $N$ & \sys & \chasesql & Ours-only wins & CHASE-only wins & Exact $p$ \\
\midrule
Final selected accuracy (all instances) & 547 & \textbf{64.90} & 60.69 & 47 & 24 & \textbf{0.0086} \\
Oracle pass@8 coverage (all instances) & 547 & \textbf{78.79} & 76.42 & 29 & 16 & 0.0725 \\
Final accuracy on jointly solvable instances & 402 & \textbf{86.32} & 81.34 & 39 & 19 & \textbf{0.0119} \\
\bottomrule
\end{tabular}
\caption{Paired same-instance comparisons on Spider2-Lite GPT-5.4 using the exact McNemar test. The strongest effect is on final selected accuracy, including the subset where both methods already contain at least one correct candidate somewhere in the pool. This supports the claim that the low-temperature \sys pool is easier to rank, not merely that it has slightly better oracle coverage.}
\label{tab:chasesql-gpt54-paired-significance}
\end{table}

\clearpage


\end{document}